\documentclass{article}

\usepackage{algorithmicx}
\usepackage{algorithm}
\usepackage{algpseudocode}
\usepackage{amsbsy}
\usepackage{amsfonts}
\usepackage{amsmath}
\usepackage{amssymb}
\usepackage{amsthm}
\usepackage{appendix}
\usepackage{arxiv}
\usepackage{balance}
\usepackage{booktabs}
\usepackage{caption}
\usepackage{doi}
\usepackage{graphicx}
\usepackage{epstopdf}

\usepackage{inputenc}
\usepackage{lipsum}
\usepackage{listings}
\usepackage{makecell}
\usepackage{mathtools}
\usepackage{microtype}
\usepackage{manyfoot}
\usepackage{mathrsfs}
\usepackage{moresize}
\usepackage{multirow}
\usepackage{natbib}
\usepackage{nicefrac}

\usepackage{pgfplots}
\pgfplotsset{compat=1.18} 

\usepackage{subcaption}
\usepackage{tabularx}
\usepackage{textcomp}
\usepackage{tikz}
\usepackage{url}
\usepackage{xcolor}

\usetikzlibrary{calc}
\usetikzlibrary{external}
\usetikzlibrary{fadings}
\usetikzlibrary{intersections}
\usetikzlibrary{pgfplots.fillbetween}
\usetikzlibrary{shadings}

\usepackage{hyperref}

\pgfdeclarelayer{pre main}

\newcommand{\N}{\mathbb{N}}
\newcommand{\Z}{\mathbb{Z}}
\newcommand{\R}{\mathbb{R}}

\newcommand{\E}{\mathbb{E}}
\newcommand{\bb}[1]{\boldsymbol{#1}}

\renewcommand{\shorttitle}{\textit{arXiv} Template}

\definecolor{CadetBlue}{rgb}{0.3,0.2,0.5}
\definecolor{Orange}{cmyk}{0.1,0.5,0.6,0}
\definecolor{Lila}{rgb}{0.5,0,0.5}
\definecolor{Head}{cmyk}{0.3,0.5, 0.2,0}
\definecolor{darkred}{rgb}{0,0.5,0}
\definecolor{TUMiddleGreen}{cmyk}{0.44,0,0.98,0.43}
\definecolor{TULightGreen}{cmyk}{0.4,0,0.89,0}
\definecolor{TUForestGreen}{cmyk}{0.94,0,0.59,0.64}
\definecolor{TULightYellow}{cmyk}{0,0.43,0.72,0}
\definecolor{ULMcyan}{rgb}{0.4,0.5,0.6}
\definecolor{TUGreen}{RGB}{12,94,82}
\definecolor{TUR}{RGB}{160,0,0}
\definecolor{teal}{RGB}{0, 128, 128}
\definecolor{wheat}{RGB}{245, 222, 179}
\definecolor{crimson}{RGB}{220, 20, 60}
\definecolor{purple}{RGB}{128, 0, 128}

\theoremstyle{thmstyleone}%
\newtheorem{lemma}{Lemma}%

\theoremstyle{thmstyletwo}%

\theoremstyle{thmstylethree}%
\newtheorem{definition}{Definition}%

\begin{document}

\title{Spatio-temporal probabilistic forecast using MMAF-guided learning}


\author{
    \hspace{1mm}Leonardo Bardi \\
   	Department of Mathematics\\
	TU Chemnitz\\
	Reichenhainer Straße 41\\
	\texttt{leonardo.bardi@math.tu-chemnitz.de} \\
    \And
    \hspace{1mm}Imma.~V. Curato \\
	Department of Mathematics\\
	TU Chemnitz\\
	Reichenhainer Straße 41\\
	\texttt{imma-valentina.curato@math.tu-chemnitz.de} \\
	\And
	\hspace{1mm}Lorenzo Proietti \\
	Department of Mathematics\\
	TU Chemnitz\\
	Reichenhainer Straße 41\\
	\texttt{lorenzo.proietti@math.tu-chemnitz.de} \\
	}

\date{}

\renewcommand{\headeright}{}
\renewcommand{\undertitle}{}
\renewcommand{\shorttitle}{Spatio-temporal probabilistic forecast using MMAF-guided learning}

\hypersetup{
pdftitle=[Spatio-temporal probabilistic forecast using MMAF-guided learning]{Spatio-temporal probabilistic forecast using MMAF-guided learning},
pdfauthor={Leonardo Bardi, Lorenzo Proietti, Imma V.~Curato},
pdfkeywords={spatio-temporal data, feed-forward neural networks, generalized Bayesian learning, ensemble forecast, calibration},
}

\maketitle

\begin{abstract}
    We present a theory-guided generalized Bayesian methodology for spatio-temporal raster data, which we use to train an ensemble of stochastic feed-forward neural networks with Gaussian-distributed weights. The methodology incorporates the dependence and causal structure of a spatio-temporal Ornstein-Uhlenbeck process into training and inference by enforcing constraints on the design of the data embedding and the related optimization routine. In inference mode, the networks are employed to generate causal ensemble forecasts by applying different initial conditions at different horizons. We call this workflow MMAF-guided learning. Experiments conducted on both synthetic and real data demonstrate that our forecasts remain calibrated across multiple time horizons. Moreover, we show that on such data, shallow feed-forward architectures can achieve performance comparable to, and in some cases better than, convolutional or diffusion deep learning architectures used in probabilistic forecasting tasks.
\end{abstract}
\medskip
{\it \textbf{MSC 2020}: 60E07, 62M20, 60G60, 62C10, 65K10, 68T07.}

\keywords{spatio-temporal data \and feed-forward neural networks \and generalized Bayesian learning \and ensemble forecast \and calibration}



\maketitle

\section{Introduction}\label{sec1}
Raster datasets are (2D) regularly sampled spatio-temporal data that are nowadays generated in environmental monitoring, from satellite observations, in finance, neuroscience, and brain imaging \citep{raster3,raster2, brain}. Such data are typically characterized by an unknown spatio-temporal dependence structure and are generated by a complex dynamical system. Assessing forecast uncertainty for such data, which can arise from noisy measurements or intrinsic stochasticity and the framework used to model them, is of utmost importance for decision-making. We propose in this paper a so-called \emph{theory-guided methodology} to address this issue, which is a hybrid approach that combines a data-generating process with a deep learning model. One of the most successful applications of the latter is called \emph{physics-informed machine learning}, which embeds physical laws into the training of a deep learning model, enabling accurate forecasts with limited data as discussed in \cite{theoryg, reviewPIM, PI, Intrinsic, Nature}. In these works, the data are typically treated as samples from solutions of partial differential equations (PDEs), and a forecast task is equivalent to approximating the PDE solution field at a given spatio-temporal position. If we further assume that the data are noisy (while maintaining deterministic generation), we need to assume that the noise is additive, independent, and Gaussian distributed. Under such assumptions, methods such as B-PINN \citep{BPINN} or Gaussian process regression \citep{Gpde} can then quantify aleatoric uncertainty using a physics-informed approach. To the best of our knowledge, setups in which data generation is assumed to be driven by a random field have been investigated only for Gaussian processes, as discussed, for example, in \cite{GPPIM, VGP}. 

Raster datasets are typically considered as generated by a random field, given the intrinsic stochasticity of most phenomena they measure. However, the Gaussian assumption is often unrealistic for them and limits the ability to assess a reasonable approximation to the \emph{(conditional) predictive distribution of the data} for generating forecasts. In this paper, we extend a workflow called MMAF-guided learning, first introduced in \cite{mmafGL} for linear models and Gibbs posterior distributions, to stochastic feed-forward neural networks with Gaussian-distributed weights. Modeling the predictive distribution and having a routine that can sample from it (or its approximation) is what we call throughout a \emph{probabilistic forecasting task}. Moreover, in our framework, uncertainty about a forecast refers to a combination of the aleatoric and epistemic uncertainties, the latter referring to the uncertainty in the neural network's parameters.

In the literature, rather than using a theory-guided methodology, a \emph{pure data-driven generative model} can also be directly employed to map observed raster data to an approximation of the predictive distribution. Several state-of-the-art deep learning architectures can be used for this aim, such as ConvLSTM \citep{XZH15}, ConvGRU \citep{TLX20}, DiffSTG \citep{HEY25}, and many others, see for a review \cite{AIWeather, WindSpeed, reviewD}. Forecasting spatio-temporal datasets, however, entails several limitations when using only deep learning architectures, primarily due to the difficulty of fully representing interactions between temporal and spatial dimensions, thereby limiting model performance. For example, accurately capturing spatio-temporal correlation in the data is crucial for enhancing performance in spatio-temporal probabilistic forecasting. On the other hand, focusing solely on this point limits the models' ability to capture causal relationships in space and time among the random variables that generate the observed dataset. In fact, the events that result in records in our data are influenced by neighboring locations and past observations, and exhibit an undeniable causal structure that is often not considered by cutting-edge generative algorithms applied to spatio-temporal data. Different approaches are currently being explored in the generative AI community \citep{Causal_diff}. However, typically, if the model we design can learn causal relationships in the data, it is not well-suited to probabilistic forecasts over multiple time horizons. Moreover, all such networks are computationally very costly. Resorting to the workflow introduced in the paper also has the target of overcoming such bottlenecks by reducing the computational cost of training a generative architecture, and at the same time modeling the dependence and the causal structure of the data. 

In MMAF-guided learning, we assume that a mixed moving average field $(\bb{Z}_t(x))_{(t,x) \in \R \times \R}$ (MMAF, in short) generates an observed raster dataset. Such a model setup allows modeling the dependence structure of the data without specifying their distribution and incorporates a causal structure represented by a so-called cone-shaped \emph{ambit set} \citep{Ambit}. This definition of causality is inspired by that of a lightcone in special relativity and, as a causal model, falls within the interpretation given by the potential outcome framework discussed by \cite{Rubin}. Moreover, since the index set lies in $\R \times \R$, MMAF can be used as a data-generating process independently of the time and spatial scale of our data. Such random fields have been so far employed to model data in environmental monitoring \citep{HIG02, STOU}, finance \citep{CS19}, imaging analysis \citep{brain}, and electricity networks \citep{Graph22} and belong to the broader class of the Ambit processes \citep{Ambit}. The latter represent a general framework for modeling spatio-temporal phenomena that also encompasses certain solutions of stochastic partial differential equations as discussed in \cite{spde}.
We focus in this work on a specific field in this class called the spatio-temporal Ornstein-Uhlenbeck process (in short, STOU), which has been thoroughly described in \cite{STOU}. This has the effect of focusing on data with exponentially decaying autocorrelation functions along the spatial and temporal dimensions. Other possible model specifications are allowed in MMAF-guided learning, but they are outside the scope of the present paper. We refer the reader to \cite{mmafGL, Leo} for further details on the possible applications of the methodology to data with a power decaying autocorrelation function (in the case of short and long memory) and to Section \ref{sec5} for further considerations on non-stationary setups.

The causal structure of the STOU process is used to define an embedding of the spatio-temporal index space, which allows us to define features that inherit the field's dependence structure, expressed using $\theta$-lex coefficients, a measure of asymptotic dependence introduced in \cite{CSS20}, and to provide a causal interpretation for the forecasts MMAF-guided learning can produce. The result of this procedure is obtaining a low-dimensional feature representation of the initial data by using standard statistical techniques, such as the method of moments. The features we create are used to train an ensemble of stochastic feed-forward neural networks (one for each spatial position) with Gaussian-distributed weights. The training is performed using a generalized Bayesian optimization routine \citep{OptCentric} based on a PAC Bayesian bound for spatio-temporal $\theta$-lex weakly dependent data. The latter has been proven and discussed in \cite{mmafGL} and unlike the classical ones available in the literature for $\alpha$-mixing or $\theta_{1,\infty}$-dependent processes, presented for example in \cite{Hostile} and \cite{AW12}, can be estimated from observed data. In \cite{mmafGL}, the PAC Bayesian bound is not directly used in the design of a training routine, but to study the generalization performance of the Gibbs posterior distribution on a linear model. It is also important to highlight that using a generalized Bayesian methodology means we do not need to model the likelihood of the data or choose a prior for the network parameters. Instead, we just need to choose a \emph{reference distribution} that serves as a regularizer in the parameter space \citep{C2004}. A so-called \emph{generalized posterior distribution} is then directly learned by solving the optimization problem at the base of the training routine. Therefore, our methodology differs from standard Bayesian ones used to analyze spatio-temporal data, as hierarchical Bayes \citep{CW11, Monte}, simply because it is not based on the use of the Bayes' theorem.

Finally, the trained networks are used to generate an \emph{ensemble forecast}. Our ensembles are determined using new input data (not belonging to the training dataset) for each future time horizon we aim to forecast. Our approach to forecasting aligns with what is done, for example, when using weather forecast routines, which, at a specific time interval, use data assimilation of observational data to update the current state of a numerical weather forecast model and produce ensemble forecasts \citep{DataAss, AIWeather}. Such forecasting schemes are suited for data sets that are updated in real time, but whose training phase is very expensive to repeat at each new evaluation time. The calibration of the ensemble forecast is tested employing the continuous ranked probability score (CRPS) \citep{gneiting05, gneiting07}. With respect to the generalized Bayesian literature, we use the term calibration in this paper in a different sense than in \cite{CalibBay} or \cite{Knob24}. Here, the authors analyze the calibration of generalized posterior distributions on the parameter space, whereas we analyze whether the outputs of stochastic feed-forward neural networks with parameters sampled from such distributions are calibrated. 

The key contributions of the paper are the following.
\begin{itemize}
\item We present a novel workflow called MMAF-guided learning, which is a theory-guided machine learning procedure that assumes a (2D) raster dataset is generated by an STOU process and uses stochastic feed-forward neural network to solve a probabilistic forecasting task.
We use this assumption to define a feature-extraction routine at a given spatial position, as well as the training and inference procedure for the networks.
\item We define a training routine based on a constrained minimization of a PAC Bayesian bound for spatio-temporal $\theta$-lex weakly dependent data. The constraint in the optimization problem forces an exponentially decaying autocorrelation function in the modeling, whereas the features employed enforce a causal model in the learning based on cone-shaped ambit sets. As a result of the optimization problem, we determine a Gaussian generalized posterior distribution over the network parameters.
\item Our workflow is designed for an ensemble of stochastic feed-forward neural networks (one for each spatial position of interest) and also includes validating the \emph{reference distribution} using the continuous ranked probability score (CRPS). The ensemble forecasts obtained in inference mode have a causal interpretation.
\item  We analyze the performance of the generalized Bayesian routine defined in the paper using synthetic and real data sets. For each considered data set, we compare the training costs and the probabilistic forecasts of stochastic feed-forward neural networks with those from the following state-of-the-art generative models: ConvLSTM, ConvGRU, and DiffSTG. In our empirical study, we observe that shallow feed-forward neural networks achieve comparable forecast performance to that of deep learning architectures. In general, our proposed training routine results in lower training computational costs and uses less data.
\end{itemize}

The paper is organized as follows. In \textbf{Section \ref{sec2}}, we describe the modeling assumptions that guide the design of the embedding and the feature extraction routine. In \textbf{Section \ref{new}}, we present the optimization problem inspired by a PAC Bayesian bound for $\theta$-lex weakly dependent data, and the specific design of the training routine for stochastic feed-forward neural networks. We also discuss the inference mode related to the latter and the causal interpretation of the ensemble forecasts. The experimental setup is described in \textbf{Section \ref{sec3}}, whereas the comparison between the performance of stochastic feed-forward neural networks and generative models is presented in \textbf{Section \ref{sec4}}. \textbf{Section \ref{sec5}} concludes.

\section{Spatio-temporal embedding and features}\label{sec2}
In this section, we present the theoretical background and the algorithm for defining a spatio-temporal embedding, along with a related feature extraction routine for 2D raster datasets. The latter is the first step of the \emph{MMAF-guided learning workflow}. We indicate random quantities throughout with bold symbols.

\subsection{Modeling Assumptions}
\label{modass}
A 2D raster data set is an observed data set  $(\tilde{Z}_t(x))_{(t,x)\in  \mathbb{T} \times \mathbb{L}}$ on a regular spatial grid $\mathbb{L} \subset \mathbb{R}$ across times $\mathbb{T}=\{t_0+h_t i: i=1,\ldots,N \}$, where $t_0 \in \R$ represents the initial observation time point and $h_t \in \R$ represents the sample frequency of the data. Moreover, we consider throughout the following decomposition, that is 
\begin{equation}
	\label{decomposition}
	\tilde{Z}_t(x)= \mu_t(x)+ Z_t(x)
\end{equation}
holds, and if measurement errors are present are indistinguishable from the stochastic signal present in the data.
Here, $\mu_t(x)$ is a deterministic function, and $Z_t(x)$ is considered a realization from a zero-mean stationary spatio-temporal Ornstein-Uhlenbeck process \citep{STOU}, STOU in short, defined as 
\begin{equation}
		\label{stou}
		\bb{Z}_t(x):=\int_{A_t(x)} \exp(-A (t-s)) \Lambda(ds,d\xi),\,\, (t,x) \in \R \times \R
	\end{equation}
where $A >0$ is called \emph{mean reverting parameter}, and $A_t(x)$ is defined as 
	\begin{equation}
		\label{lightcone}
		A_t(x) := \big\{(s,\xi)\in\mathbb{R}\times\mathbb{R}: s\leq t \,\, \textit{and}\,\, \|x-\xi\| \leq c |t-s|  \big\}.
	\end{equation}
 $\Lambda$ is a  L\'evy basis. More in detail, let $\mathcal{B}_b(\R \times \R)$ be the Borel $\sigma$-algebra containing all its Lebesgue bounded sets. 

\begin{definition}\label{def:levybasis}
A family of $\R$-valued random variables $\Lambda=\{\Lambda(B): B \in \mathcal{B}_b(\R \times \R)\}$ is called a L\'evy basis on $(\R \times \R,\mathcal{B}_b(\R \times \R))$ if it is an independently scattered and infinitely divisible random measure. This means that:
\begin{itemize}
		\item[(i)]  For a sequence of pairwise disjoint elements of $\mathcal{B}_b(\R \times \R)$, say $\{B_i, i \in \N \}:$
		\begin{itemize}
			\item $\Lambda (\bigcup_{i\in\N} B_i)= \sum_{i\in\N}\Lambda(B_i)$ almost surely when $\bigcup_{i\in\N} B_i \in \mathcal{B}_b(I)$
			\item and $\Lambda(B_i)$ and $\Lambda(B_j)$ are independent for $i\neq j$.
		\end{itemize}
		\item[(ii)] Let $B \in \mathcal{B}_b(\R \times \R)$. Then, the random variable $\Lambda(B)$ is infinitely divisible, i.e., for any $i \in \N$, there exists a law $\mu_i$ such that the law $\mu_{\Lambda(B)}$ can be expressed as $\mu_{\Lambda(B)}=\mu_i^{*i}$, the $i$-fold convolution of $\mu_i$ with itself.
	\end{itemize}
\end{definition}

For more details on infinitely divisible distributions, we refer the reader to \cite{S}. In the following, we will restrict ourselves to L\'evy bases which are homogeneous in space and time and factorizable. As apparent from (\ref{stou}) and Definition \ref{def:levybasis}, an STOU process has an integral definition with a possible non-Gaussian noise structure, and it is general enough to model the \emph{randomness} contained in observed raster data. 

Another special characterization of the STOU process is its \emph{spatio-temporal causal structure}, which is embodied by the set $A_t(x)$ formally called an \emph{ambit set}, see \cite{Ambit}. The latter satisfies the following properties.

\begin{definition}
    A family of ambit sets $(A_t(x))_{(t,x)\in\R\times\R}$ is a collection of subsets $\R\times\R$ which satisfies the following properties:
    \begin{enumerate}
        \item translation invariance, i.e. $A_t(x)=A_0(0)+(t,x)$ for all $(t,x)\in\R\times\R$;
        \item $A_s(x)\subset A_t(x)$ for all $s<t$ and for all $x\in\R$;
        \item non-anticipative, i.e. $A_t(x)\cap(t,+\infty)\times\R=\emptyset$
    \end{enumerate}
\end{definition}

To explain why a cone-shaped ambit set as defined in (\ref{lightcone}) introduces a causal structure in space-time, we borrow the interpretation given to a \emph{lightcone} in special relativity. In fact, as a lightcone describes the possible paths that the light can make in space-time, leading to a space-time point $(t,x)$ and the ones that lie in its future, we use its geometry (in the Euclidean space $\R \times \R$)  to identify the space-time points having a causal relationship in the past and the future. The set $A_t(x)$ corresponds to what we call a \emph{past cone of influence}, whereas the set 
\begin{equation}
	\label{future_lightcone}
	A_t(x)^+=\{(s,\xi)\in \R\times\R : s >t \, \textrm{and} \, \,  \|x-\xi\| \leq c |t-s| \},
\end{equation}
is the \emph{future cone of influence} related to the spatial-time point $(t,x) \in \R \times \R$. By using an STOU process as the underlying model, we implicitly assume that the following sets
\begin{equation}
	\label{lightcone2}
	l(t,x)=\{\bb{Z}_s(\xi): (s,\xi) \in A_t(x) \setminus (t,x) \} \,\,\, \textrm{and} \,\,\, l^{+}(t,x)=\{\bb{Z}_s(\xi): (s,\xi) \in A_t(x)^+ \}
\end{equation}
are respectively describing the values of the field that have a direct influence on the determination of $\bb{Z}_t(x)$ and the future field values influenced by $\bb{Z}_t(x)$. We can then infer the causal relationship between space-time points described above by estimating the constant $c$ from the observed raster data, which we call \emph{the speed of information propagation} in the physical system under analysis. The concept of causality used in the paper can be inscribed in the potential outcomes framework described by \cite{Rubin}.

The STOU is a stationary and Markovian random field. Moreover, an STOU exhibits exponentially decaying temporal autocorrelation (just like the temporal Ornstein-Uhlenbeck process) and spatial autocorrelation. In addition, this class of fields admits non-separable autocovariances, which are desirable in practice \citep{STOU}. The STOU is a member of a broader class of random fields called \emph{mixed moving average fields}, MMAF in short, see \cite{mmafGL}. All MMAFs are stationary and $\theta$-lex-weakly dependent random fields. This means that independent of their distribution, we can assess the asymptotic dependence structure of such fields, described in our framework by the definition below.

\begin{definition}
    \label{thetadependence}
    Let $\textbf{Z}$ be an $\mathbb{R}$-valued random field. Then, $\textbf{Z}$ is called $\theta$-lex-weakly dependent if 
    $$
    \theta_{lex}(r)  = \sup_{u,v\in \mathbb{N}} \theta_{u,v}(r) \underset{r\to \infty}{\longrightarrow} 0
    $$
    where 
    \begin{align*}
		\theta_{u,v}(r) = \sup\left\{ \frac{|Cov(F(\textbf{Z}_{\Gamma}) , G(\textbf{Z}_{\Gamma^{\prime}})  )|}{\Vert F \rVert_{\infty} v Lip(G)} , F\in \mathcal{G}^*_u, \, G\in \mathcal{G}_v, \Gamma,\Gamma^\prime, |\Gamma|=u,\,|\Gamma^\prime|=v \right\}
    \end{align*}

    where $\mathcal{G}^*_u$ represents the set of the bounded functions on $\R^u$, $\mathcal{G}_v$ the set of bounded and Lipschitz functions on $\R^v$,  $\Gamma, \Gamma^\prime \subset \R \times \R$ sets of space-time points in lexicographic order such that $dist(\Gamma,\Gamma^{\prime}):= \inf_{i \in \Gamma, j \in \Gamma^{\prime}} \|i-j\|_{\infty}=r$, and where we call the related marginals of the field $\bb{Z}_{\Gamma}$ and $\bb{Z}_{\Gamma^{\prime}}$. We indicate with the sequence $(\theta_{lex}(r))_{r\in\mathbb{R}^+}$ the $\theta$-lex coefficients.
\end{definition}

For discrete samples of MMAF, it can be proven that $\theta$-lex weak dependence is a more general notion of dependence than $\alpha$ and $\phi$-mixing for spatio-temporal random fields, see  Section 2.3 in \cite{CSS20}. Moreover, knowing that the field $\bb{Z}_t(x)$ is $\theta$-lex weakly dependent it can be readily proven that the sample $(\bb{Z}_t(x))_{(t,x)\in \mathbb{T}\times \mathbb{L}}$, generating the observed (zero mean) raster data set, has the same dependence structure. 

In the case of the STOU process $(Z_{t}(x))_{(t,x) \in \R \times \R}$, assuming that $\E \left[ \lvert\bb{Z}_t(x)\rvert^2 \right] < \infty$, we have that

\begin{align}
\label{boundT}
\theta_{lex}(r) \leq  \Big( \,\frac{c}{A^2} Var(\Lambda^{\prime})  \,\exp\Big( -\,\underset{2\lambda}{\underbrace{\frac{A \min(2,c)}{c}}}\,r \Big) \Big)^{\frac{1}{2}}
\end{align}

where we call the parameter $\lambda$ the decay rate of the $\theta$-lex coefficients, see Appendix \ref{est} for detailed calculations.

In short, by choosing an STOU process as the model underlying the data generation, we impose a given causal structure in the data in function of the cone-shaped ambit sets and a dependence structure described by exponentially decaying $\theta$-lex coefficients. We assume throughout that the STOU process admits finite second moments.

We refer to Appendix \ref{est} for a brief review of the estimation methodology based on the method of moments necessary to estimate the parameter $A$, $c$, and the decay rate $\lambda$. 

\subsection{Features Extraction Routine}
\label{emb}
We design spatio-temporal features as input–output pairs, which are then partitioned into training, validation, and test sets from the 2D observed raster dataset using an embedding of the index set $\mathbb{T} \times \mathbb{L}$. W.l.o.g., we will assume that $h_t=1$ throughout. 

For a given spatio-temporal position $(t,x^*)$, we define related features by means of the index set 
\begin{equation}
\label{index}
I(t,x^*) = \{(i_s,x_s) \in \mathbb{T} \times \mathbb{L}: | x^* - x_s | \leq c(t-i_s), 0<t-i_s\leq p\}
\end{equation}
having cardinality $D:=|I(t,x^*)|$. Such a set corresponds to a discretization in space and time of the set $A_t(x)$ defined in (\ref{lightcone}). The entire number of examples employed in training, validation, and test set represents the \emph{features} we can automatically learn once an embedding is defined. In general, the parameter $c$ in the definition of the embedding is estimated from the data, see Appendix \ref{est}, $p$ is a hyperparameter, and $a$ is chosen following the rule (\ref{choice}) which is discussed in Section \ref{bound}. We then define the input-output pair $(X_i, Y_i)$ defined through the index sets $I(t_0+ia,x^*)$ for $i=1,\ldots,\lfloor \frac{N}{a} \rfloor$, as
\begin{align}
\label{inputoutput}
	X_i = (Z_{i_1}(x_1),\ldots,Z_{i_{D}}(x_{D}))^\top \,\,\, Y_i = Z_{t_0+ia}(x^*),
\end{align}
where $(i_s,x_s) \in I(t_0+ia,x^*)$ and $(i_s,x_s)<_{lex} (i_{s+1},x_{s+1})$ for all $s=1,\ldots, D-1$. The symbol $<_{lex}$ indicates that the point $(i_s,x_s)$ is in lexicographic order less than $(i_{s+1},x_{s+1})$ in $\R \times \R$, and the parameters $a >0$ and $p>0$ are chosen such that $a \geq p+1$. Note that each input vector $X_i$ corresponds to a sample from $l(t_0+ia,x^*)$ defined in (\ref{lightcone2}). Figure \ref{frame} represents an example in which the input-output pairs are represented: in this instance, the input vector corresponds to $3$ spatial locations. 

We call $S_m=\{(X_i, Y_i)\}_{i=1}^m$ the training data set for a given spatial position $x^*$ defined through the set $I(t_0+ia,x^*)$ for $i=1,\ldots,m$. Similarly, the validation set is composed of the input-output pair $(X_{m+1},Y_{m+1})$, which is defined using the embedding $I(t_0+(m+1)a,x^*)$, and the test set is composed by the input-output pairs $(X_{i},Y_{i})$ which is obtain from the sets $I(t_0+ia,x^*)$, for $i=m+2,\ldots, \lfloor \frac{N}{a} \rfloor$.  

We provide in Algorithm \ref{algorithm1} a detailed description of the feature extraction routine. Note that Algorithm \ref{algorithm1} varies if we choose a different data-generating process or if the dimension of the spatial dimension increases. In \cite{Leo}, for example, a feature extraction routine for 3D raster data sets under the assumption that the data are generated by a \emph{mixed spatio-temporal Ornstein Uhlenbeck process}, defined in \cite{mstou}, can be found. 

\pgfsetlayers{pre main,main}
\begin{figure}  
    \centering
    \caption{Feature extraction with parameters $c=1$, $p=1$, and $a=3$.} 
    \label{frame}
    \begin{tikzpicture}[
        frame/.style={
            yslant=1.7,
            inner sep=0pt,
            outer sep=0pt
        },     
        2d0/.style={
            frame,
            draw=gray,
            line width=0.5pt,
            opacity=.9,
            execute at begin node={
                \includegraphics[width=.4cm,height=4cm]{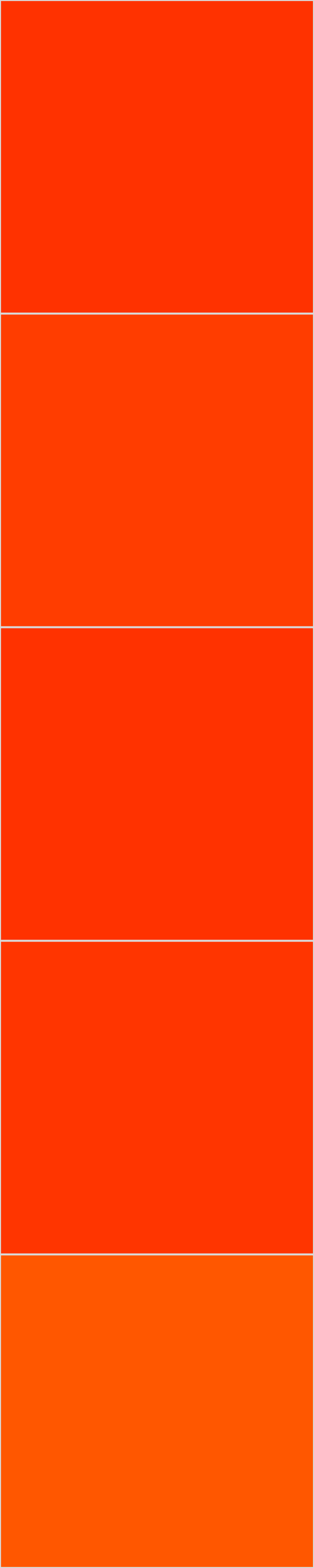}
            }
        },
        2d1/.style={
            frame,
            draw=gray,
            line width=0.5pt,
            opacity=.9,
            execute at begin node={
                \includegraphics[width=.4cm,height=4cm]{raster1.png}
            }
        },
        2d2/.style={
            frame,
            draw=gray,
            line width=0.5pt,
            opacity=.9,
            execute at begin node={
                \includegraphics[width=.4cm,height=4cm]{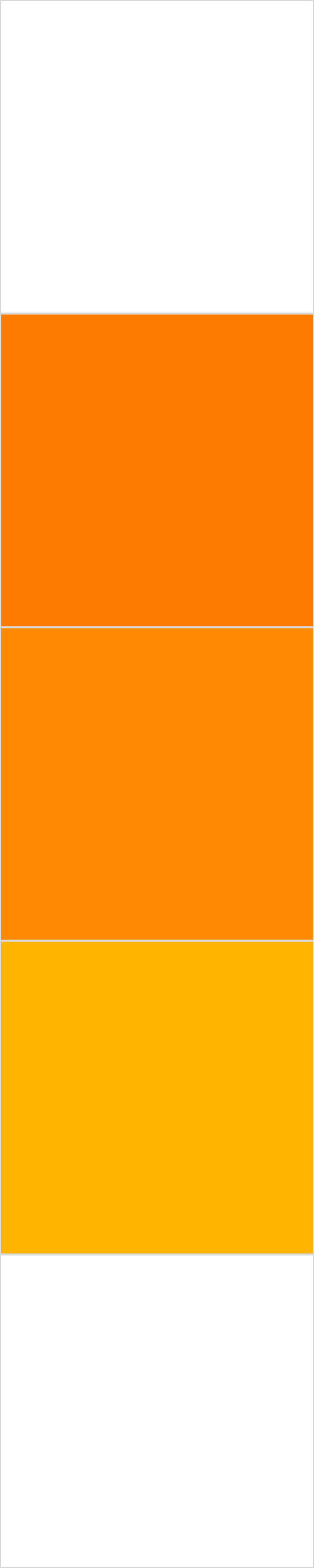}
            }
        },
        2d3/.style={
            frame,
            draw=gray,
            line width=0.5pt,
            opacity=.9,
            execute at begin node={
                \includegraphics[width=.4cm,height=4cm]{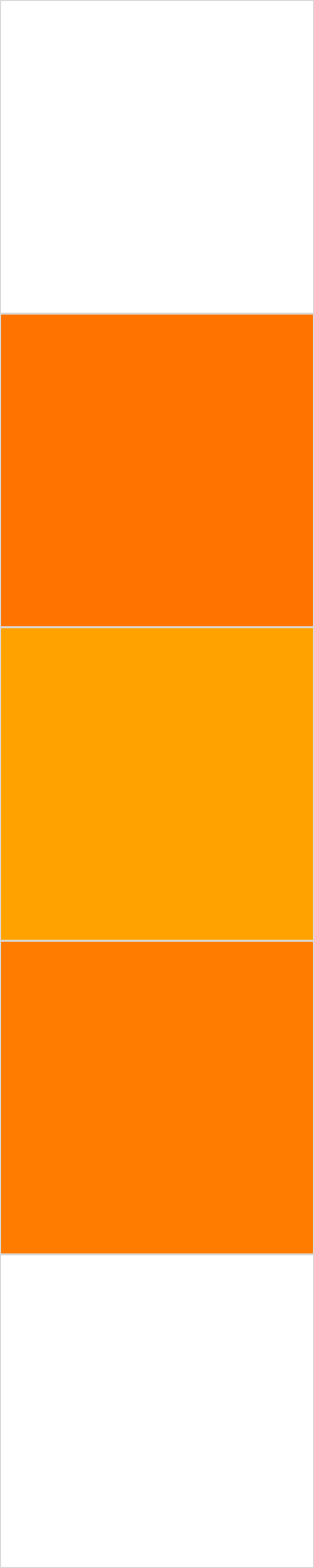}
            }
        },
        2d4/.style={
            frame,
            draw=gray,
            line width=0.5pt,
            opacity=.9,
            execute at begin node={
                \includegraphics[width=.4cm,height=4cm]{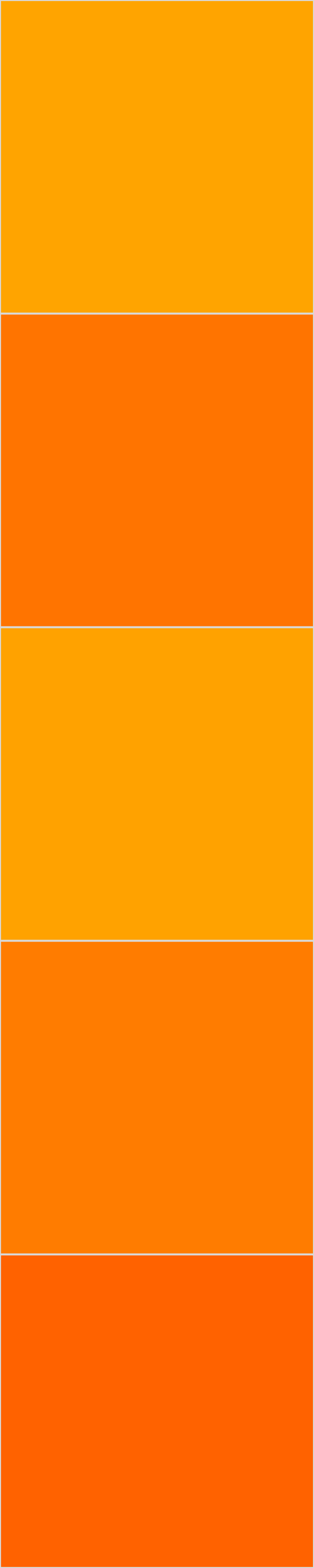}
            }
        },
        2d5/.style={
            frame,
            draw=gray,
            line width=0.5pt,
            opacity=.9,
            execute at begin node={
                \includegraphics[width=.4cm,height=4cm]{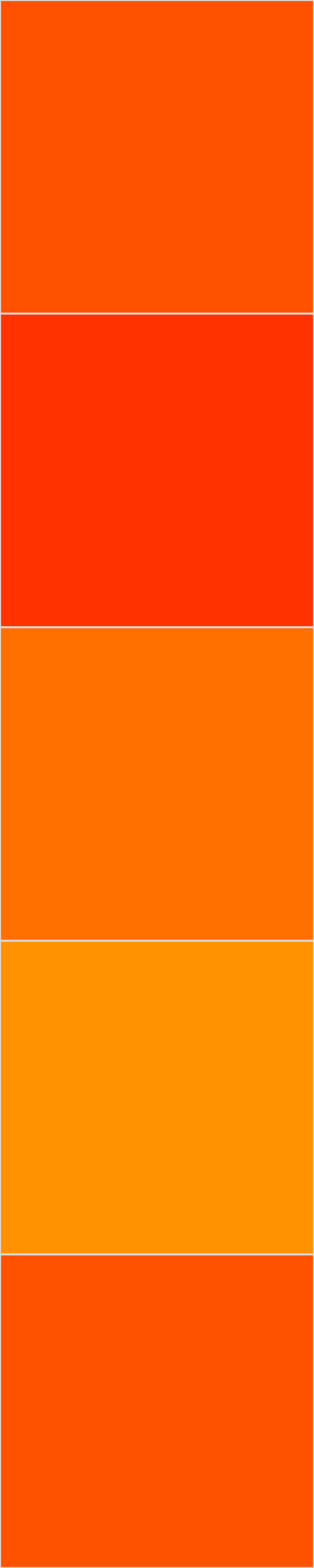}
            }
        },
        2d6/.style={
            frame,
            draw=gray,
            line width=0.5pt,
            opacity=.9,
            execute at begin node={
                \includegraphics[width=.4cm,height=4cm]{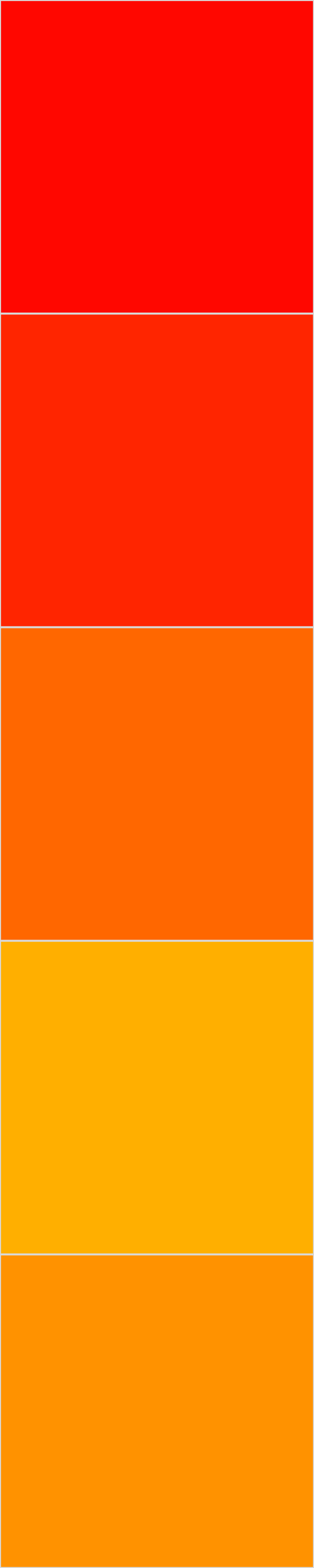}
            }
        },
        2d7/.style={
            frame,
            draw=gray,
            line width=0.5pt,
            opacity=.9,
            execute at begin node={
                \includegraphics[width=.4cm,height=4cm]{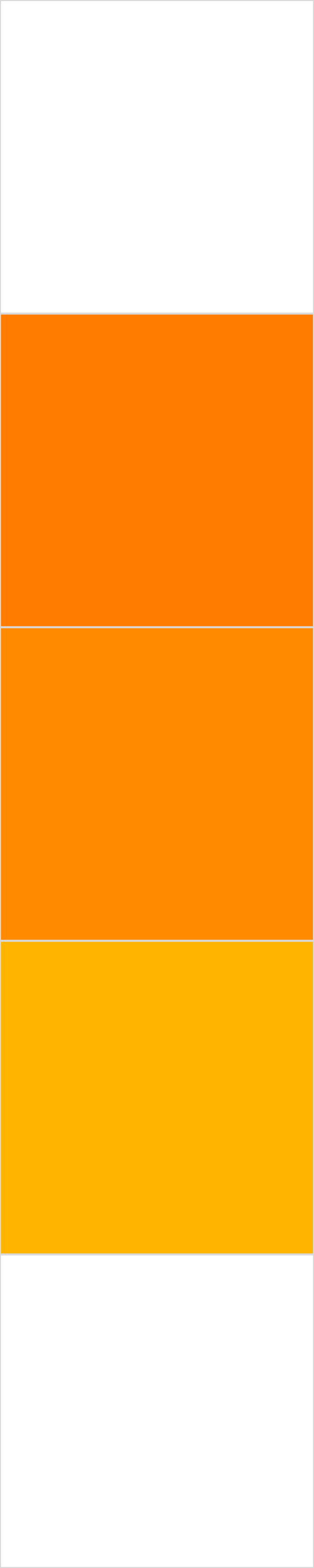}
            }
        },
        2d8/.style={
            frame,
            draw=gray,
            line width=0.5pt,
            opacity=.9,
            execute at begin node={
                \includegraphics[width=.4cm,height=4cm]{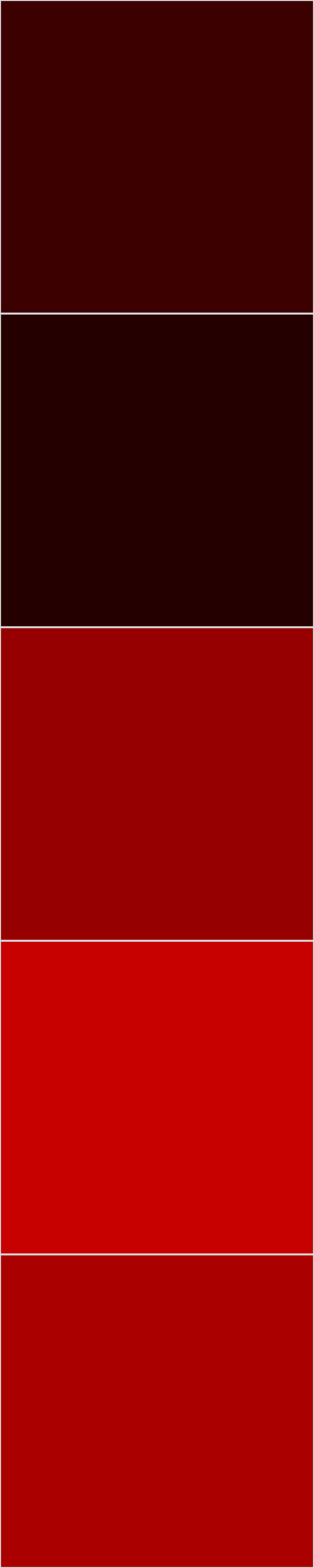}
            }
        },
        2t0/.style={
            frame,
            draw=gray,
            line width=0.8pt,
            opacity=.9,
            execute at begin node={
                \includegraphics[width=.4cm,height=4cm]{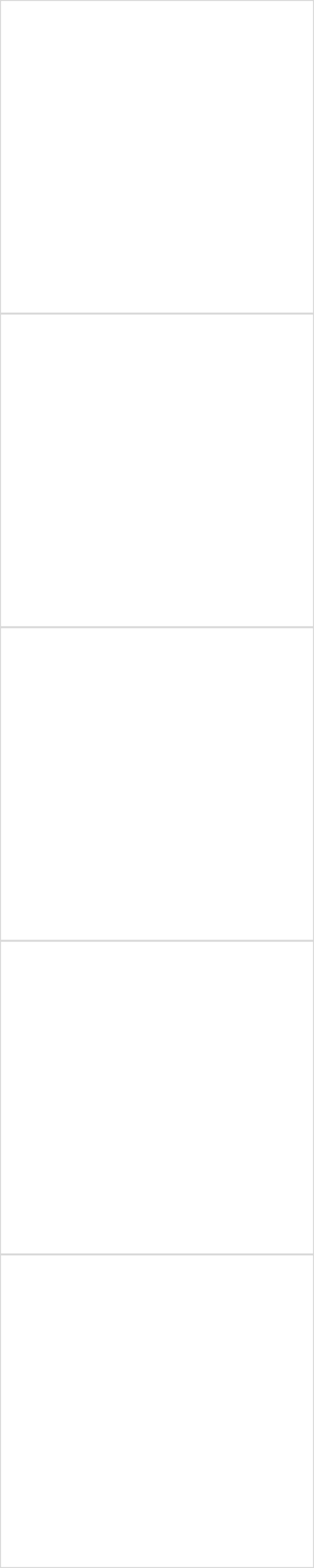}
            }
        },
        2t1/.style={
            frame,
            draw=crimson,
            line width=0.8pt,
            opacity=.9,
            execute at begin node={
                \includegraphics[width=.4cm,height=4cm]{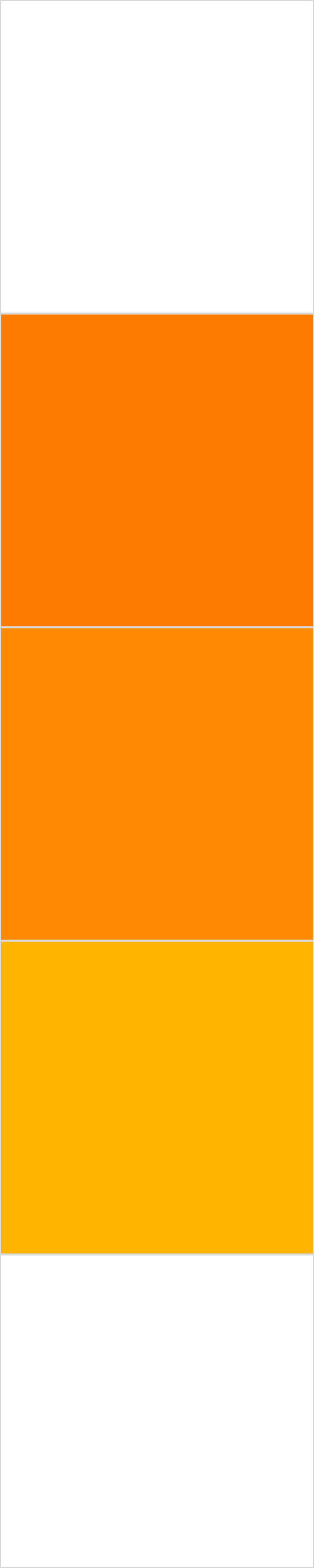}
            }
        },
        2t2/.style={
            frame,
            draw=crimson,
            line width=0.8pt,
            opacity=.9,
            execute at begin node={
                \includegraphics[width=.4cm,height=4cm]{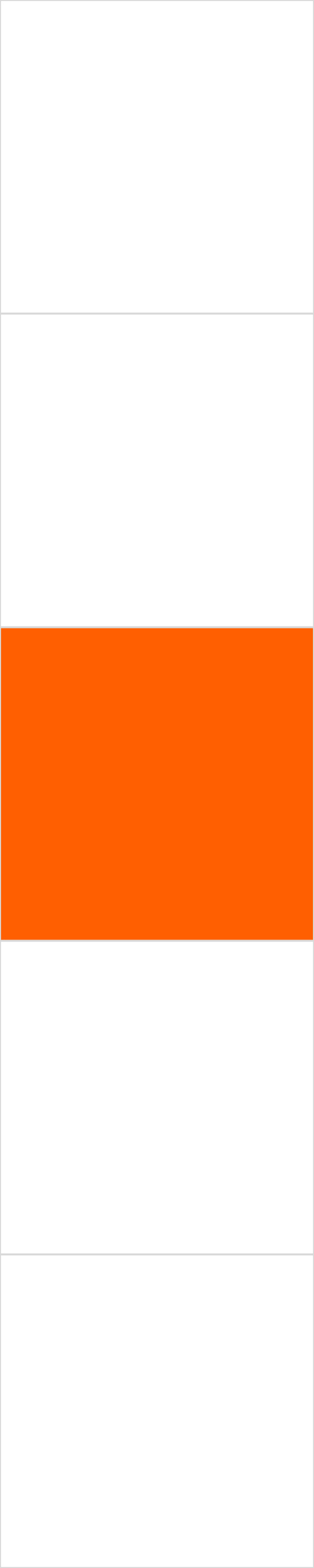}
            }
        },
        2t3/.style={
            frame,
            draw=gray,
            line width=0.8pt,
            opacity=.9,
            execute at begin node={
                \includegraphics[width=.4cm,height=4cm]{rasterSetvoid.png}
            }
        },
        2t4/.style={
            frame,
            draw=TULightYellow,
            line width=0.8pt,
            opacity=.9,
            execute at begin node={
                \includegraphics[width=.4cm,height=4cm]{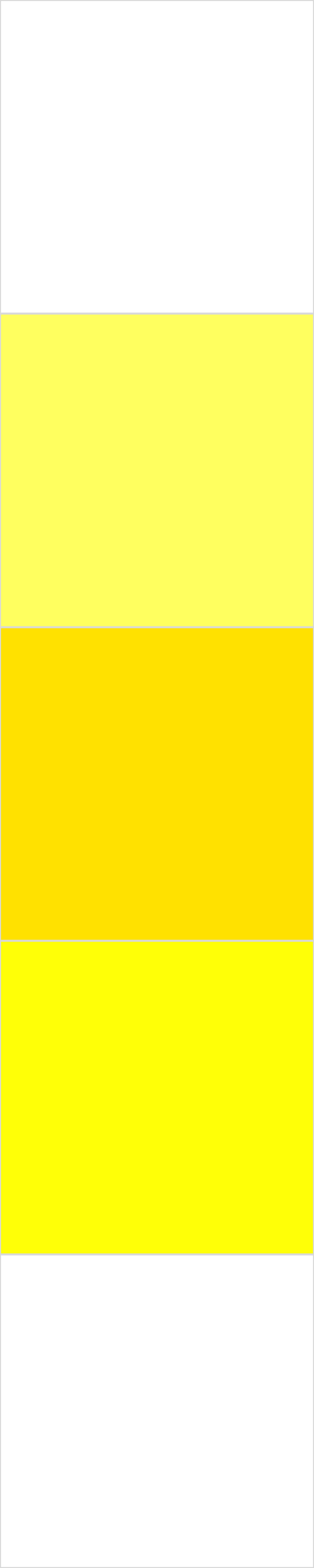}
            }
        },
        2t5/.style={
            frame,
            draw=TULightYellow,
            line width=0.8pt,
            opacity=.9,
            execute at begin node={
                \includegraphics[width=.4cm,height=4cm]{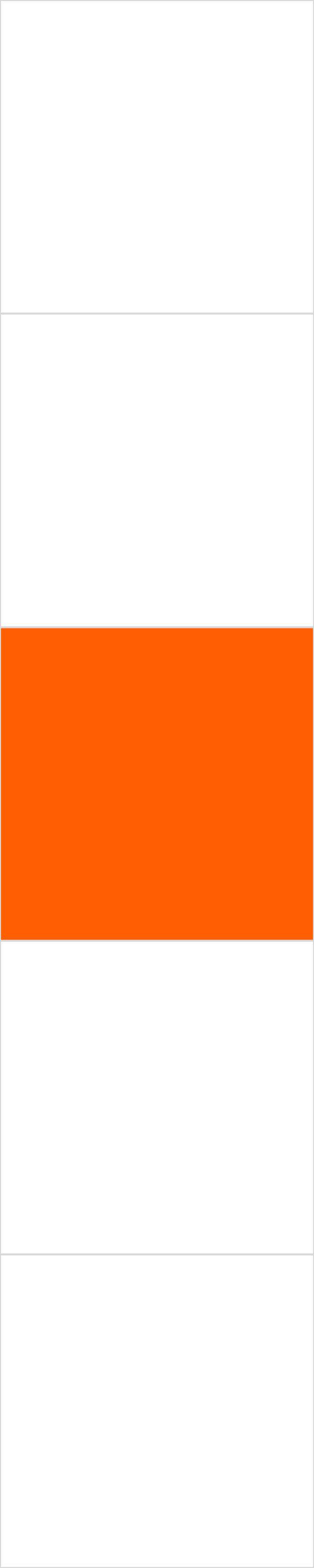}
            }
        },
        2t6/.style={
            frame,
            draw=gray,
            line width=0.8pt,
            opacity=.9,
            execute at begin node={
                \includegraphics[width=.4cm,height=4cm]{rasterSetvoid.png}
            }
        },
        2t7/.style={
            frame,
            draw=TUGreen,
            line width=0.8pt,
            opacity=.9,
            execute at begin node={
                \includegraphics[width=.4cm,height=4cm]{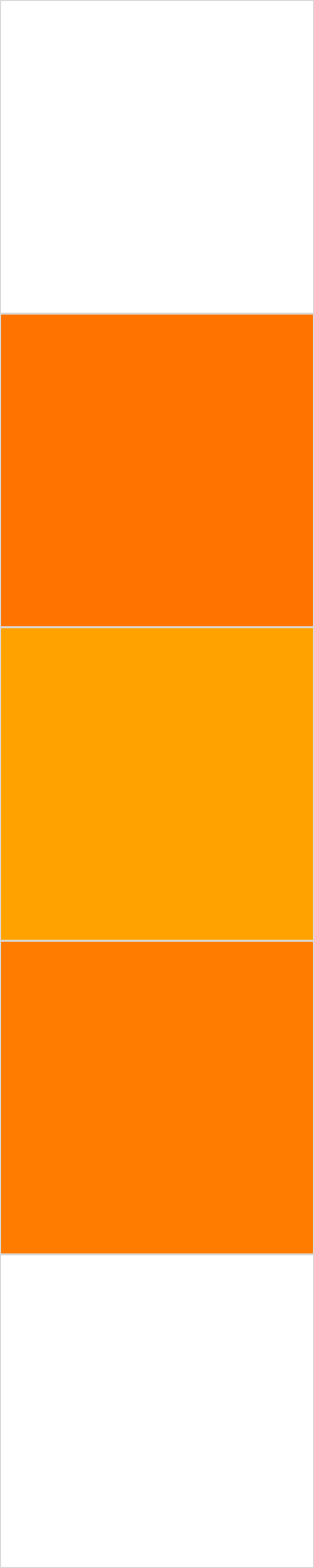}
            }
        },
        2t8/.style={
            frame,
            draw=TUGreen,
            line width=0.8pt,
            opacity=.9,
            execute at begin node={
                \includegraphics[width=.4cm,height=4cm]{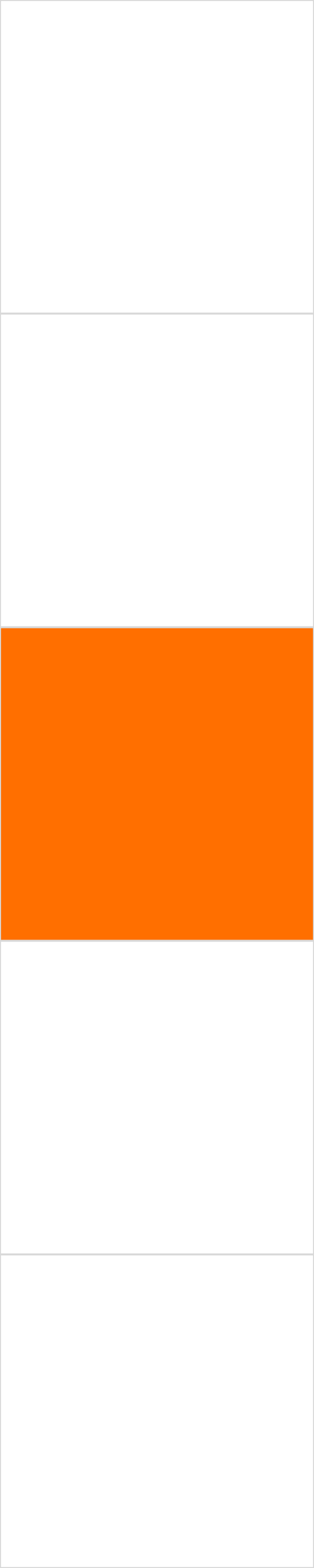}
            }
        },        
    ]
    
    \begin{scope}[xshift=7.5cm, scale=0.8, transform shape]
    \draw[->,thin](0,-0.5)--(0,5)
        node[rotate=90,above,midway]{Spatial position};
    \draw[->,thin](-.5,0)--(8,0)
        node[right,below]{Time};
    \draw[dashed,thin,TUGreen](7.2,2.73)--(6.79,3.18);
    \draw[dashed,thin,TULightYellow](7.2-2.25,2.73)--(6.79-2.25,3.18);
    \draw[dashed,thin,crimson](7.2-4.5,2.73)--(6.79-4.5,3.18);
    \draw[dashed,thin,TUGreen](7.21,1.95)--(6.455,1.15);
    \draw[dashed,thin,TULightYellow](7.21-2.25,1.95)--(6.455-2.25,1.15);
    \draw[dashed,thin,crimson](7.21-4.5,1.95)--(6.455-4.5,1.15);
    
    \foreach \x in {0,...,8}
        \node[2t\x] at (.75*\x+1,2) {};
    \node[crimson]    at (1.8,-0.9) {Training};
    \node[TULightYellow] at (.75*4+1,-0.9) {Validation};

    \node[TUGreen]  at (6.4,-0.9) {Test};
    \draw[-,thin,TUGreen](6.8,2.05)--(6.045,2.85);
    \draw[-,thin,TUGreen](6.79,3.18)--(6.455,3.55);
    \draw[-,thin,TUGreen](6.8,1.28)--(6.045,.45);

    \draw[-,thin,TULightYellow](6.8-2.25,2.05)--(6.045-2.25,2.85);
    \draw[-,thin,TULightYellow](6.79-2.25,3.18)--(6.455-2.25,3.55);    \draw[-,thin,TULightYellow](6.8-2.25,1.28)--(6.045-2.25,.45);

    \draw[-,thin,crimson](6.8-4.5,2.05)--(6.045-4.5,2.85);
    \draw[-,thin,crimson](6.79-4.5,3.18)--(6.455-4.5,3.55);
    \draw[-,thin,crimson](6.8-4.5,1.28)--(6.045-4.5,.45);

    \draw[-,thin](-.5,0)--(8,0);
    \draw[-,thin](-.5,0)--(8,0);
    \end{scope}
    \end{tikzpicture}
\end{figure}
    
We call $\bb{S}=(\bb{X}_i,\bb{Y}_i)_{i \in \Z}$ the stochastic process generating the data described above. $\bb{S}$ is defined on the canonical probability space $(\Omega, \mathcal{F}, \mathbb{P})$ and it is stationary because of the property of the data generating process (\ref{stou}) where $(\bb{X}_i,\bb{Y}_i)$ have values in $\R^D \times \R$ and are identically distributed. $\bb{S}$ inherits the dependence structure of the field $\bb{Z}$ as proven in Appendix \ref{Inher}. Moreover, the geometry of the ambit set enables us to provide a causal interpretation of the forecasts generated using such features in a given spatial position $x^*$ as discussed in Section \ref{casualstructure}.

\begin{algorithm}[H]

\caption{Feature extraction routine for a spatio-temporal position $(t,x^*)$.}

\medskip
\begin{tabular}{lll}
\textbf{Input:} & $(Z_t(x))_{(t,x)\in \mathbb{T}\times\mathbb{L}}$ & \parbox[t]{9.8cm}{Raster data consisting of realizations from a (zero mean) STOU process} \\
& $p$ & \parbox[t]{9.8cm}{Hyperparameter (its default value is equal to $1$)}\\
\end{tabular}

\medskip
\begin{enumerate}
    \item Estimate the parameters $A,c,\lambda$ using equations \eqref{almut1} and \eqref{plug-in}.
    \item Select $a$ as the smallest integer satisfying (\ref{choice})
    \item Define the index sets $I(t_0+ia,x^*),\, i=1,\ldots,\lfloor\frac{N}{a}\rfloor$ following \eqref{index}.
    \item Extract features from $l(t_0+ia,x^*)$ having indices in $I(t_0+ia,x^*)$.
\end{enumerate}

\medskip
\textbf{Output:} $(X_i,Y_i)$ such that 
\begin{align*}
        X_i = (Z_{i_1}(x_1),\ldots,Z_{i_{D}}(x_{D}))^\top \qquad Y_i = Z_{t_0+ia}(x^*),
    \end{align*}
    with $i=1,\ldots,\lfloor\frac{N}{a}\rfloor$.
\label{algorithm1}
\end{algorithm}

\section{MMAF-guided learning for stochastic feed-forward neural networks}
\label{new}
In this section, we describe the training and inference steps in the \emph{MMAF-guided learning workflow} applied to stochastic feed-forward neural networks. We assume throughout that a 2D raster data set is generated by the STOU process (\ref{stou}) with finite second moments.

\subsection{Optimization and Training}
\label{bound}

We work with the following class of stochastic feed-forward neural networks. Let $0$ and $\ell +1$ be the indexes denoting the input and output layers, $l$ the total number of hidden layers (indexed from $1$ to $\ell$), $D$ the dimension of the input space, and
\[
n_0=D,n_1,\ldots,n_\ell,n_{\ell+1}=1
\]
the widths of each layer in the network. We then define for an input  $X_i \in \R^D $
\begin{equation}
\label{feed}
h_{\theta}(X_i)= W^{(\ell+1)}\sigma^{(\ell)}(W^{(\ell)}(\sigma^{(\ell-1)}(W^{(\ell-1)} \ldots \sigma^{(1)}(W^{(1)}X_i+b^{(1)})+b^{(2)} \ldots)+b^{(\ell-1)})+b^{(\ell)}), 
\end{equation}
where the functions $\sigma^{(1)},\ldots, \sigma^{(l)}$ are ReLu activation functions and $$\theta:=(W^{(1)},\ldots, W^{(\ell)}, W^{(\ell+1)}, b^{(1)},\ldots,b^{(\ell)})$$ corresponds to all the weights and bias parameters of the network. We assume in general that $\theta \in \R^d$. The class of stochastic feed-forward neural network we work with is then defined as $\mathcal{H}:=\{ h_{\theta}(\cdot): \theta \sim \mathcal{N}(\mu, diag(\kappa)), \text{for $\mu \in \R^d$ and $\kappa \in \R^d_+$}\}$.

Let $S_m:=(X_i,Y_i)_{i=1}^m$ be the training data set defined in Section \ref{emb},  $\epsilon >0$ be a hyperparameter called the \emph{accuracy level}, we define the truncated absolute loss $L^{\epsilon}:\R\times\R\longrightarrow[0,+\infty]$, as $L^{\epsilon}(x,y):=|x-y|\wedge \epsilon$. For a given realization of the parameter $\theta$, we define the \emph{population risk} $R(h_{\theta}):=\E[L^{\epsilon}(h_{\theta}(\bb{X}),\bb{Y})]$ and the \emph{empirical risk} $r(h_{\theta}):=\frac{1}{m}\sum_{i=1}^m L^{\epsilon}(h_{\theta}(X_i), Y_i)$  for each $m \geq 1$. 

We select now a \emph{reference distribution} $\pi$ on the space $(\R^d, \mathcal{B}(\R^d))$, where $\mathcal{B}(\R^d)$ indicates the Borel $\sigma$-algebra on $\R^d$. The reference distribution provides a structure on the parameter space $\R^d$, which we interpret as our belief that certain parameters are more likely than others to yield better performance. We then learn a distribution over the parameter space \emph{using the observed training data} by solving an optimization problem. We call such distribution a \emph{generalized posterior distribution} $\rho$ on $(\R^d, \mathcal{B}(\R^d))$, and this optimization-centric way of selecting $\rho$ a \emph{generalized Bayesian algorithm} \citep{OptCentric}. Let us first give a more careful definition of the former.

\begin{definition}
    Let $\mathcal{G}_m:=\sigma(\bb{S}_m)$, where $\bb{S}_m$ is a random vector generating the training data $S_m$. We define a generalized posterior distribution $\rho$ as the regular conditional probability distribution of the random variable $\hat{\bb{h}}$ defined on $(\Omega,\mathcal{F}, \mathbb{P})$ with values in $(\R^d, \mathcal{B}(\R^d))$ given $\mathcal{G}_m$, i.e. for all $\omega \in \Omega$ and $E \in \mathcal{B}(\R^d)$
    \[
     \rho(E,\omega):=\mathbb{P}(\hat{\bb{h}} \in E|\mathcal{G}_m)(\omega),
    \]
    where
\begin{itemize}
		\item[a)] for any $E \in \mathcal{B}(\R^d)$, the function $\omega \to \rho(E,\omega) $ is $\mathcal{G}_m$-measurable and a variant of the conditional probability $\mathbb{P}(\bb{\hat{h}} \in E | \mathcal{G}_m)$, i.e.,
		\[
		\rho(E, \cdot) = \mathbb{P}(\hat{\bb{h}} \in E | \mathcal{G}_m)(\cdot) \,\,\, \text{a.s.}
		\]
		\item[b)] for any $\omega \in \Omega$, $\rho(\cdot,\omega)$  is a probability measure on $(\R^d, \mathcal{B}(\R^d))$.
	\end{itemize}
\end{definition}

Second, the optimization problem is defined using the following theoretical result. In \cite{mmafGL}, a so-called PAC Bayesian bound is proven, which is a bound on the \emph{expected generalization gap} between the \emph{average population and average empirical risk} defined for $\omega \in \Omega$ as $$\rho[R(h_{\theta})](\omega)-\rho[r(h_{\theta})]:= \int_{\R^d} R(h_{\theta}) \, \rho(d\theta, \omega) -  \int_{\R^d} r(h_{\theta}) \, \rho(d\theta, \omega) $$ holding with high probability $\mathbb{P}$, i.e., with respect to the distribution of the stochastic process $\bb{S}$ generating the features. We indicate throughout with $\pi[\cdot]$ the expectation with respect to the reference distribution $\pi$. We typically drop the dependence of $\rho$ on $\omega$ and simply write $\rho$ if we are considering it as a probability measure on $(\R^d, \mathcal{B}(\R^d))$.

For all $m\geq 1$, and $\omega \in \Omega$ such that $\rho$ is absolutely continuous  with respect to the reference distribution $\pi$,  the following inequality holds with probability $\mathbb{P}$ at least $1-2\delta$ for $\delta\in (0,1)$:
\begin{small}
\begin{align}
\label{PAC}
	\rho[R(h_{\theta})]  \leq &  \rho[r(h_{\theta})] + \frac{KL(\rho,\pi)+\log \Big(\frac{1}{\delta}\Big)}{\sqrt{m}} +\frac{\epsilon^2}{2\sqrt{m}} \nonumber\\
    &+ \Big( \frac{\epsilon}{\delta} \pi[2(Lip(h_{\theta})D+1)\theta_{lex}(a-p)] (\mathcal{X}^2(\rho,\pi)+1) \Big)^{\frac{1}{2}}.
\end{align}
\end{small}
In the inequality, $\theta_{lex}(a-p)$ is a $\theta$-lex coefficient of the data generating process $\bb{Z}$ (an STOU process in our assumptions) and $Lip(h_{\theta})$ refers to the Lipschitz coefficient of the feed-forward neural network $h_{\theta}$ for a given realization $\theta$ of the parameters. The symbols $KL$ and $\mathcal{X}^2$ stand for the \emph{Kullback-Leibler} and \emph{Chi-square} divergence between two probability measures, respectively. A proof of the inequality (\ref{PAC}) can be found in \cite{mmafGL}. We report in Table \ref{tab:summaryPac} a summary of all parameters, hyperparameters, and symbols appearing in the inequality (\ref{PAC}) for clarity.

\begin{table}[]
     \centering
    \begin{tabular}{c|c}
    \hline
    \hline
    Symbol & Meaning \\
    \hline
    $\rho$ & Generalized posterior distribution\\
    $\pi$ & Reference distribution\\
    $h_{\theta}$ & Feed Forward Neural Network\\
    $Lip(\cdot)$ & Lipschitz constant\\
    $R(\cdot)$ & Population risk\\
    $r(\cdot)$ & Empirical risk\\
    $KL$ & Kullback-Leibler divergence\\
    $\mathcal{X}^2$ & Chi-square divergence\\
    $\epsilon$ & Accuracy level\\
    $m$ & Length of the training set\\
    $\delta$ & Confidence level\\
    $D$ & Dimension of the input\\
    $a$ & Temporal distance between $\bb{Y}_i$ and $\bb{Y}_{i+1}$ (parameter)\\
    $p$ & Past time steps used to design $\bb{X}_i$ (hyperparameter)\\
    $\theta_{lex(a-p)}$ & $\theta$-lex coefficient of $\bb{Z}$\\
    \end{tabular}
    \caption{summary of all parameters, hyperparameters, and symbols appearing in the inequality (\ref{PAC})}
    \label{tab:summaryPac}
\end{table}

We could then select a generalized posterior distribution $\rho$ that minimizes the right-hand side of the inequality (\ref{PAC}). The generalized posterior selected in this way achieves the best possible generalization performance on data generated by $\bb{S}$, since the PAC-Bayesian bound certifies that the average generalization gap is minimal with high probability. If, for example, $\delta=0.025$, the algorithm is expected to generalize well, i.e., it has a low expected generalization gap with probability $95\%$. Note that, given an accuracy level $\epsilon$, the bound of the expected generalization gap in (\ref{PAC}) doesn't have to overcome this threshold to give a generalization guarantee. We call such bounds \emph{non-vacuous}.
The ideal optimization routine will then be to select, for a given observed data set (which in our notations corresponds to one $\omega \in \Omega$), a generalized posterior distribution $\rho$ belonging to the set of all probability measures defined on $(\R^d, \mathcal{B}(\R^d))$ which minimizes the following target function:
\begin{equation}
\label{no}
\rho[r(h_{\theta})] + \frac{KL(\rho,\pi)+\log \Big(\frac{1}{\delta}\Big)}{\sqrt{m}} + \Big( \frac{\epsilon}{\delta} \pi[2(Lip(h_{\theta})D+1)\theta_{lex}(a-p)] (\mathcal{X}^2(\rho,\pi)+1) \Big)^{\frac{1}{2}}.
\end{equation}
Note that the addend $\frac{\epsilon^2}{2\sqrt{m}}$ appearing in the inequality (\ref{PAC}) is not included because it is a constant factor and does not depend on $\theta$.  A theoretical solution for (\ref{no}) is an open problem. We then turn to a numerical optimization. The problem with optimizing (\ref{no}) with respect to $\rho$ in the space of all probability measures on $(\R^d, \mathcal{B}(\R^d))$ lies in the simultaneous presence of the \emph{KL} and \emph{Chi-square} divergence in the bound.
Especially, optimizing the \emph{Chi-square} leads to several numerical instabilities \citep{GD21}. Therefore, we typically approximate it rather than base an optimization routine on its original definition; see \cite{optchi2, optchi} for some exemplary solutions to such issue. In this work, we \emph{linearize} the \emph{Chi-square} divergence using a Taylor approximation \cite{ChiApprox}, that is we approximate $\chi^2(\rho,\pi) \simeq 2KL(\rho,\pi)$. Next, we constrain the $\theta$-lex coefficient appearing in the bound by choosing the parameter $a$ in the design of the embedding such that for $\delta=0.025$
\begin{equation}
\label{choice}
\theta_{lex}(a-p) \sim \exp(-\lambda (a-p)) \leq \frac{\delta}{ 2 \,m \, \epsilon}.   
\end{equation}
We know in fact from (\ref{boundT}) that the $\theta$-lex coefficients have an exponential decay if we assume that the data are generated by an STOU process.
In conclusion, training in our methodology means selecting a generalized posterior distribution  $ \rho \in \{\mathcal{N}(\mu,diag(\kappa))$ for $\mu \in \R^d$ and $\kappa \in \R^d_+\}$ which minimizes the target function
\begin{align}
\label{target}
\rho[r(h_\theta)] +  \frac{KL(\rho,\pi) + [(2KL(\rho,\pi)+1) (\pi[Lip(h_\theta)]D+1)]^{1/2}}{\sqrt{m}},
\end{align}
with $\pi\sim N(0,sI_d)$, $s\in\R_+$, being a multivariate Gaussian distribution with diagonal covariance matrix. The generalized posterior distribution obtained by minimizing the target function (\ref{target}) loses its interpretation as the right-hand side of a PAC-Bayesian bound. However, our experiments in Section \ref{sec3} show that we typically obtain an approximation $\rho^*$ of the minimum to which corresponds, for our choice of the parameter $a$, a non-vacuous PAC Bayesian bound holding with probability of at least $95\%$. We give in Algorithm \ref{algorithm2} a description of the training routine. Note that, in general, the algorithm can use batch learning (where the batches are taken in chronological order) and epochs.



\begin{algorithm}
\caption{MMAF-guided learning training algorithm for stochastic feed-forward neural networks for a spatial position $x^*$.}

\medskip
\begin{tabular}{lll}

\textbf{Input:} & $S_m$=$\{(X_i, Y_i)\}_{i=1}^m$  & \parbox[t]{9.6cm}{Training data set.}\\
& $m^{\prime}$ & Batch size.\\
& $N^{\prime}$ & Number of batches.\\
& $\epsilon,\eta\in \mathbb{R}_+$ & \parbox[t]{9.6cm}{Accuracy level, Adam learning rate.}\\
& $\pi \sim \mathcal{N}(0,sI_d)$ & \parbox[t]{9.6cm}{Reference distribution, $s\in\R_+$.}\\
& $h_{\theta}$ & \parbox[t]{7cm}{Feed-forward neural network architecture with parameters $\theta \in \R^d$.}\\
& $\rho^{0}\sim \mathcal{N}(\mu^0,diag(\kappa^0))$ & \parbox[t]{6.6cm}{Initialization of the generalized posterior distribution with $\mu^0:=0\in\R^d,\kappa^0\in\R^d_+$.}\\
\end{tabular}

\;\;\;\textbf{for $\iota= 0, \ldots, N^{\prime}-1$ do:}
\medskip
\begin{enumerate}
    \item Estimate $Lip(h_{\theta})$ using the upper bound
    \begin{equation}
    \label{algorithm:LipBound}
        Lip(h_{\theta})\leq\prod_{i=1}^{\ell+1} ||W^{(i)}||_2
    \end{equation}
    \item Compute the Monte Carlo estimator of $\pi[Lip(h_{\theta})]$ using the bound (\ref{algorithm:LipBound}):
    \[
    \pi[Lip(h_{\theta})]\simeq \frac{1}{1000}\sum^{1000}_{j=1} \prod_{i=1}^{\ell+1} ||W_j^{(i)}||_2,
    \]
    where $\theta^j=(W^{(1)}_j,\ldots, W^{(\ell)}_j, W^{(\ell+1)}_j, b^{(1)}_j,\ldots,b^{(\ell)}_j)$ is drawn from $\pi$ for $j=1,...,1000$.
    \item Compute $\nabla\rho^{\iota}[r(h_\theta)]$ employing pathwise gradient estimator:
    \begin{equation}\label{grad1}
     \nabla_{(\mu^{\iota},\kappa^{\iota})}\rho^{\iota}[r(h_\theta)] \simeq \nabla r(h_{{\theta^{\iota}}_\cdot}),\quad {\theta^{\iota}} \text{ single draw from }\rho^{\iota}, 
    \end{equation}
    where
    \(
    r(h_{\theta^{\iota}})=\frac{1}{m^{\prime}}\sum_{j\in \mathcal{J}_{\iota}}L^{\epsilon}(h_{\theta^{\iota}}(X_j),Y_j),
    \)
    and $\mathcal{J}_{\iota}=\{\iota m^\prime\ldots,(\iota+1)m^\prime\}$ is the $\iota$-th batch.
    \item Compute 
    \(
    KL(\rho^{\iota},\pi)=\frac{1}{2}\sum_{i=1}^d \left(\log\left(\frac{s}{\kappa^{\iota}_i}\right) -1 +\frac{\kappa^{\iota}_i}{s} + \frac{(\mu^{\iota}_i)^2}{s}\right)
    \)
    \item Compute 
    \begin{small}
      \begin{equation}\label{grad2}
     \nabla_{(\mu^{\iota},\kappa^{\iota})} \left(\frac{KL(\rho^{{\iota}},\pi) + ((2KL(\rho^{{\iota}},\pi)+1) (\frac{D}{1000}\displaystyle{\sum^{1000}_{j=1} \prod_{i=1}^{\ell+1}} ||W_j^{(i)}||_2 +1))^{1/2}}{\sqrt{m}} \right)   
     \end{equation}  
    \end{small}
    \item Update $\rho^{\iota+1}$ using Adam update rule with $f(\theta^{\iota}):=$(\ref{grad1})+(\ref{grad2}) and learning rate $\eta$.
\end{enumerate}
\;\;\;\textbf{end for}

\medskip
\begin{tabular}{lll}
\textbf{Output:} & $\rho^{*}$ &  
\end{tabular}
\label{algorithm2}
\end{algorithm}

\subsection{Causal Forecast and Inference}
\label{casualstructure}
Let $X_i$ be an input belonging to the validation or test data set related to the spatial position $x^*$, i.e, $X_i = (Z_{i_1}(x_1),\ldots,Z_{i_{D}}(x_{D}))^\top $ for $(i_s,x_s) \in I(t_0+ia, x^*)$ and $i=m+1,\ldots, \lfloor \frac{N}{a} \rfloor$.
Then, following the definition given in (\ref{lightcone2}), the input vector $X_i$ can be used to forecast the values of the field $\bb{Z}$ which corresponds to the spatial-time positions  
$$
A(X_i)^+:=\{(t,s) \in \bigcap_{s=1}^D A_{i_s}(x_s)^+\}
$$
Note that the spatial-time position $(t_0+ia,x^*)$ belongs to $A(X_i)^+$ and can then be forecasted starting from the input $X_i$, see Figure \ref{future}.
\begin{figure}[!htbp]
\centering
\begin{tikzpicture}
\def\dx{1.3}
    \def\dy{0.7}
    \def\r{0.9pt}
    
    \filldraw[fill=yellow!40, draw=yellow, opacity=0.5]
        (\dx-4*\dx,\dy)--(3*\dx-4*\dx,3*\dy)--(3*\dx-4*\dx,-1*\dy)--cycle;
    
    \filldraw[fill=orange!40, draw=orange, opacity=0.3]
        (\dx-4*\dx,2*\dy) 
        --(3*\dx-4*\dx,4*\dy) 
        --(3*\dx-4*\dx,0*\dy)
        --cycle;
    
    \filldraw[fill=pink!40, draw=pink, opacity=0.5]
        (\dx-4*\dx,3*\dy)--(3*\dx-4*\dx,5*\dy)--(3*\dx-4*\dx,1*\dy)--cycle;
    
    \filldraw[fill=TUGreen!40, draw=TUGreen, opacity=0.5]
        (\dx-4*\dx,1*\dy)
        --(\dx-4*\dx,3*\dy)
        --(2*\dx-4*\dx,2*\dy)
        --cycle;
    \filldraw[fill=crimson!40, draw=red, opacity=0.5]
        (2*\dx-4*\dx,2*\dy)--(3*\dx-4*\dx,3*\dy)--(3*\dx-4*\dx,1*\dy)--cycle;

    \foreach \i in {1}{
        \foreach \j in {1,...,3}{
            \fill (\i*\dx-4*\dx, \j*\dy) circle (\r);
        }
    }

    \foreach \i in {1}{
        \foreach \j in {-1,0,4,5}{
            \fill[fill=gray!75] (\i*\dx-4*\dx, \j*\dy) circle (\r);
        }
    }
    \foreach \i in {2}{
        \foreach \j in {2}{
            \fill[fill=crimson] (\i*\dx-4*\dx, \j*\dy) circle (\r);
        }
    }

    \node[above, font=\tiny] at (\dx-4*\dx,1*\dy) {$(i_1,x_1)$};
    \node[above, font=\tiny] at (\dx-4*\dx,2*\dy) {($i_2$,$x_2$)};
    \node[above, font=\tiny] at (\dx-4*\dx,3*\dy) {$(i_3,x_3)$};
    \node[right, font=\tiny] at (2*\dx-4*\dx,2*\dy) {$(t_0+ia,x^*)$};

    \draw[->] (-4.8,-1.2) -- (-1,-1.2) node[below,midway] {Time};
    
    \draw[->] (-4.8,-1.2) -- (-4.8,3.8) node[rotate=90,above,midway] {Spatial position};

\end{tikzpicture}
\caption{The black dots indicate the spatial-temporal index grid $\R \times \R$, corresponding to an input feature $X_i=(Z_{i_1}(x_1),Z_{i_2}(x_2),Z_{i_3}(x_3)$ for $(i_s,x_s) \in I(t_0+ia,x^*)$. The pink area identifies the future cone of influence related to the spatial-time position $(i_1,x_1)$, the orange area the future cone of influence from $(i_2,x_2)$, and the yellow one the future cone of influence from $(i_3, x_3)$. All three spatio-temporal positions together influence the future realizations of the field at the spatio-temporal points in the red area, which is a subset of $\R \times \R$.}
\label{future}
\end{figure}

In MMAF-guided learning, we forecast the values of the field $\bb{Z}$ at the time points $t_0+ia$ for $i=m+1,...,\lfloor \frac{N}{a} \rfloor$ at the spatial position $x^*$. Other forecasting schemes can be investigated based on the input features' ability to predict future values of the data-generating process. For example, \cite{mmafGL} investigates a scheme where the forecasts correspond to one-step ahead with respect to the information contained in the training data set. The latter scheme also has a causal interpretation, and it can be extended to multiple time steps into the future. We leave the analysis of such forecasting schemes to future applications of MMAF-guided learning.

In Section \ref{sec3}, we employ a training data set $$\mathcal{T}=(S_m^1, S_m^2, \ldots, S_m^{R})$$ corresponding to $R$ different spatial positions in a raster. At the end of the training step (parallelized with respect to each spatial position), we obtain the set of generalized posterior distributions $\{\rho^{*,1},\rho^{*,2},\ldots, \rho^{*,R} \}$. The latter are used in inference mode to compute ensemble forecasts for each spatial position at times $t=t_0+i a$ for $i=m+1, \ldots, \lfloor \frac{N}{a} \rfloor$. In our experiments, we call $H \in \{0,\ldots, \lfloor \frac{N}{a} \rfloor - i \}$ the \emph{terminal time horizon} for which forecasts are possible starting from an observation at time $t=t_0+ia$ and an observed raster dataset of length $N$. We describe below the Algorithm \ref{algorithm3} which generates ensemble forecasts (in a given spatial position $x^r$ for $r\in\{1,\ldots,R\}$) with $J$ members across $H$-time horizons. 

\begin{algorithm}[h!]
\caption{Inference step: ensemble forecasts generation across multiple time horizons for a single spatial position $x^r$.}

\medskip
\begin{tabular}{lll}
\textbf{Input:} & $\rho^{*,r}$ & \parbox[t]{11cm}{Generalized posterior distribution on $(\R^d, \mathcal{B}(\R^d))$}\\
& $X_i^r$  & \parbox[t]{11cm}{Input belonging to the validation or test set related to the spatial position $x^r$}\\
& $H$  & \parbox[t]{11cm}{Terminal time horizon}\\
& $h_{\theta^{r,j}}$ & \parbox[t]{11cm}{Feed-forward neural network architecture with parameters $\theta^{r,j} \in \R^d$}\\
& $J$  & \parbox[t]{11cm}{Number of ensemble members}\\
&&\\
\end{tabular}

\;\;\;\textbf{for $j= 1 \ldots J$ do:}
\medskip
\begin{enumerate}
    \item Draw a sample $\theta^{r,j}$ from $\rho^{*,r}$.\\
    \item Compute $\left(h_{\theta^{r,j}}(X_{i + h}^r),\ldots,h_{\theta^{r,j}}(X_{i+h}^r) \right)_{h=0}^H$, the $j$-th member of the ensembles.\\
\end{enumerate}
\;\;\;\textbf{end for}

\medskip
\begin{tabular}{lll}
\textbf{Output:} &  $\left[\left(h_{\theta^{r,1}}(X_{i+h}^r),\ldots,h_{\theta^{r,J}}(X_{i+h}^r) \right)\right]_{h=0}^H$ &
\end{tabular}
\label{algorithm3}
\end{algorithm}

We claim that the ensemble members for a fixed spatial position $x^r$ and horizon $h\in\left\{ 0, \ldots, \lfloor \frac{N}{a}\rfloor-i \right\}$ for $i=m+1,\ldots,\lfloor \frac{N}{a}\rfloor $, are samples from a distribution that approximates $\mathbb{P}(\bb{Y}_{i+h}^r | \bb{X}_{i+h}^r=X_{i+h}^r)$, i.e., the \emph{conditional predictive distribution of the data} in our setup which is unknown and, in general, not of Gaussian type. We test this claim in the next section by employing the \emph{Continuous Ranked Probability Score} \cite{gneiting07}, a probabilistic metric that assesses the consistency between the ensemble members' distributions and the observed value.

\section{Experiments}
\label{sec3}

\subsection{Data set and Training set-ups}
\label{preprocessing}
We employ three datasets in our experiments on stochastic feed-forward neural networks. Two are synthetic and sampled from an STOU process with a Gaussian and a normal inverse Gaussian distributed L\'evy basis, generated using the diamond grid algorithm \citep{STOU}. We refer to the synthetic datasets as GAU and NIG throughout, both of which consist of raster datasets with $10$ spatial positions, $8$ of which are used in the inference step, and $N=10^6$ temporal records. The last one is a real dataset of \emph{hourly 2m surface temperature} records (2mT) from ERA5, the fifth generation of atmospheric data reanalysis provided by the ECMWF \citep{Era5}. The 2mT dataset contains $101$ spatial components over the equator, corresponding to a longitude west in degrees of $ 57.5, \ldots,82.5$,  whereas the temporal component consists of hourly sampled values of the temperature from January 1st, 2016 until December 31st, 2025, for a total of $N=96.431$ temporal records. The variable considered is the air temperature 2 meters above the surface of land, sea or in-land waters. It is calculated by interpolating between the lowest model level and the Earth's surface, taking into account the atmospheric conditions, and it is measured in Kelvin. $8$ spatial locations between longitudes west in degrees $80.25,\ldots, 82.25$  are used in the inference step. For all data sets, we set $t_0=0$ throughout.

Regarding the 2mT dataset, we first preprocess the data to remove any deterministic component. It is assumed that the function $\mu_t(x)$ in \eqref{decomposition} has the following form
\begin{equation*}
    \mu_t(x) = a_t(x)t + a_s(t)x + b_t(x) + b_s(t) + s_d(t,x) +s_a(t,x),
\end{equation*}
where the term $a_t(x)t + a_s(t)x + b_t(x) + b_s(t)$ consists of a sum of linear trend components along each of the raster coordinates, in line with \citep{HA21}, while $s_d(t,x),s_a(t,x)$ correspond to the daily and annual seasonal components, respectively, modeled via harmonic regression \citep{HR73}: 
\begin{equation*}
\begin{split}
    & s_d(t,x):= \sum_{i=1}^3 s^1_{d,i}(x) \sin\left(\frac{2\pi i}{24}t\right) + s^2_{d,i}(x) \cos\left(\frac{2\pi i}{24}t\right), \\
    & s_a(t,x):= \sum_{i=1}^5 s^1_{a,i}(x) \sin\left(\frac{2\pi i}{8760}t\right) + s^2_{a,i}(x) \cos\left(\frac{2\pi i}{8760}t\right).
\end{split}
\end{equation*}

To initiate the workflow, we need to determine the embedding for each spatial position of interest. This is done by estimating the values of the \emph{mean reverting parameter} $A$, the \emph{speed of information propagation} $c$, and the decay rate $\lambda$ of the $\theta$-lex coefficients of $\bb{Z}$, following the estimation procedure in Appendix \ref{est}. The estimations of such values are listed in Table \ref{pre-est} for all three datasets. Such estimations allow us to learn the dependence and causal structure of the data under our modeling assumption. We then select the hyperparameters $p$ and $\epsilon$ following the theoretical framework of \cite{mmafGL}, which provides the necessary insights to tune them accordingly: that is, $p$ is chosen equal to $1$ for ease of computation and $\epsilon=3$. We then select the parameter $a$ according to rule (\ref{choice}), yielding $a=8$ for the GAU and NIG data sets and $a=146$ for the 2mT data set. All obtained features are then partitioned along the temporal dimension following the characterization given in Section \ref{emb} for the training, validation, and test data sets. The length of the training data set is $m=124899$ for the GAU/NIG, whereas $m=559$ for the 2mT data set. Moreover, the validation set consists of a single example, and the test sets contain $100$ and $40$ examples for GAU/NIG and 2mT data, respectively.

We refer to each feed-forward architecture using the symbol $w^\ell$, where we assume that the inner layers have the same width, i.e., $w:=n_1=\ldots=n_{\ell}$ following the notations of Section \ref{bound} and $\ell$ is the number of the hidden layers. In our study, we consider six networks, denoted by $10^2$, $10^5$, $30^2$, $100^2$, $300^2$, and $800^3$. The training routine is performed in parallel across $8$ spatial coordinates. We perform batch learning for the GAU and NIG datasets with $m^{\prime}=1000$ and run the algorithm for $30$ and $40$ epochs, respectively. In the case of the 2mT data set, we use the optimization routine without batch learning and run the algorithm for $5000$ epochs. The stopping rule of our training is based on the convergence study in Appendix \ref{convergence}.
\begin{table}[h]
\begin{center}
\begin{tabular}{l|l|l|l|l|l}
\textbf{\textbf{Data Set}}  &\textbf{$A^*$}&\textbf{$c^*$}&\textbf{$\lambda^*$}& m & a\\
\hline 
GAU & 3.851 & 1.013 & 1.896 & 124899 & 8\\
\hline
NIG &3.869 & 0.998& 1.938 & 124899 & 8\\
\hline
2mT & 0.163 & 1.014 &
0.081 & 559 & 146\\
\end{tabular}
\end{center}
\caption{Estimation of parameters $A$, $c$, $\lambda$, size of the training data set $m$, and chosen parameter $a$ following rule (\ref{choice}).}
\label{pre-est}
\end{table}

In our experiments, we initialize the mean of the reference and posterior distributions to the zero vector, and initialize the covariance matrix of the posterior distribution to $\frac{1}{4} I_d$, where $I_d$ indicates the identity matrix of dimension $d \times d$. The covariance matrix of the reference distribution is \emph{validated} following the procedure described in Section \ref{choice_prior}.


\subsection{Evaluation Metrics}
\label{metrics}
For each spatial position $x^r$ for $r=1,\ldots, R$  and time $ia$ for $i=m+1,\ldots, \lfloor \frac{N}{a} \rfloor $, we can determine ensemble forecasts for multiple-time horizons up to $H \in \{0,\ldots, \lfloor \frac{N}{a} \rfloor - i \}$ terminal time horizons. We then indicate the related ensemble forecasts with $J$ members across $H$-time horizons for the spatial position $x^r$ with $\left[\left(h_{\theta^{r,1}}(X_{i+h}^r),\ldots,h_{\theta^{r,J}}(X_{i+h}^r) \right)\right]_{h=0}^H$, and with $Y_{i+h}^r$ the observed values of the field $\bb{Z}$ in the spatial time point $((i+h)a,x^r)$. In our analysis, we consider two evaluation metrics.

The first is the \emph{Mean Squared Error} (MSE) across the raster, a deterministic metric, defined at each time horizon $h$ as
\begin{align*}
MSE_h(h_{\theta}) = \frac{1}{R} \sum_{r=0}^{R} \frac{1}{J} \sum_{j=1}^{J} (h_{\theta^{r,j}}(X_{i+h}^r)-Y_{i+h}^r)^2
\end{align*}
Note that all values of the MSE reported in our tables below are averaged across all the considered time horizons.

The second metric we employ is the \emph{Continuous Ranked Probability Score} (CRPS), a probabilistic metric used to assess the calibration and sharpness of ensemble forecasts \citep{gneiting05,gneiting07}, which is specified in terms of the cumulative distribution function (cdf) related to the predictive distribution $\mathbb{P}(\bb{Y}_{i+h}^r|\bb{X}_{i+h}^r=X_{i+h}^r)$ for a given time horizon $h$ and spatial position $x^r$. It allows us to evaluate if the ensemble forecast $\left(h_{\theta^{r,1}}(X_{i+h}^r),\ldots,h_{\theta^{r,J}}(X_{i+h}^r) \right)$ is consistent with respect to the observed value $Y_{i+h}^r$, and it is defined as 
\begin{align}
\label{theocrps}
CRPS\Big(\mathbb{P}(\bb{Y}_{i+h}^r|\bb{X}_{i+h}^r=X_{i+h}^r),Y_{i+h}^r\Big) = \mathbb{E}[\bb{Y}-Y_{i+h}^r] - \frac{1}{2}\mathbb{E}[\bb{Y} - \bb{Y}^{\prime} ],\, 
\end{align}
where the random variables $\bb{Y}$ and $\bb{Y}^{\prime}$ are independent random variables distributed with respect to the (conditional) predictive distribution $\mathbb{P}(\bb{Y}_{i+h}^r|\bb{X}_{i+h}^r=X_{i+h}^r)$. In our framework, it is not possible to have access to the latter, so we compute the empirical counterpart of (\ref{theocrps}) using the ensemble forecast $\left(h_{\theta^{r,1}}(X_{i+h}^r),\ldots,h_{\theta^{r,J}}(X_{i+h}^r) \right)$, i.e.

$$
\widehat{CRPS}^r_h(h_{\theta}) = \left( \frac{1}{J}\sum_{j=1}^J |h_{\theta^{r,j}}(X_{i+h}^r) - Y_{i+h}^r| - \frac{1}{J^2}\sum_{j_1=1}^J\sum_{j_2=1}^J |h_{\theta^{r,j_1}}(X_{i+h}^r) - h_{\theta^{r,j_2}}(X_{i+h}^r)| \right)
$$

Note that all the CRPS values reported in our tables below are averaged across the time horizons and the spatial positions considered.

\subsection{Validation of the reference distribution}
\label{choice_prior}

We set up the following grid for the selection of the covariance matrix of the reference distribution, 
\begin{align*}
\left\{s^{-1} I_d: s \in \{10,30,50,70,90,110,130,150,170,190,210\}\right\}.
\end{align*}
We then validate the choice of the covariance matrix with respect to the parameter $s$ by minimizing the average CRPS score computed on the validation set. To the best of our knowledge, this is the first workflow where a CRPS score is used to validate the selection of a reference distribution. Several alternative strategies exist for choosing a reference distribution in PAC-Bayesian learning \citep{Pac-intro, dziugaite}. However, such strategies have been mostly tested on random samples (i.e., independent and identically distributed data).
We demonstrate in our experiments that the validation strategy above enables us to generate \emph{ensemble forecasts} for the test set with CRPS scores of the same order of magnitude as those observed in validation. We refer to our forecasts to \emph{remain calibrated across multiple time horizons} when the latter happens. 

We can also draw a parallel between a well-known post-processing procedure used in weather forecasting, the EMOS \citep{gneiting05, Lerch18}, and our validation methodology. In the former, a minimum CRPS estimation (that makes use of observed data and ensemble forecasts produced by \emph{numerical weather forecast models}) is used to produce calibrated ensemble forecasts in a given spatial-time position, whereas in the latter, we use CRPS minimization to select a prior distribution, which gives us a calibrated ensemble forecast on the validation set.

\subsection{Baseline Models}
\label{baselines}

\begin{table}[h!]
\centering
\small
\caption{Description of the architectures used in the experiments}
\label{tab:model_parameters}

\begin{tabular}{lp{3.5cm}c}
\hline
Model & Architecture & Param. \\
\hline

SFFN & $[10^2]$ & $\sim10^2$\\
SFFN & $[10^5]$ & $\sim5\cdot10^2$\\
SFFN & $[30^2]$ & $\sim10^3$\\
SFFN & $[100^2]$ & $\sim10^4$\\
SFFN & $[300^2]$ & $\sim10^5$\\
SFFN & $[800^3]$ & $\sim10^6$\\

ConvLSTM & 2($k\times1$)-1($k\times k$) & $\sim10^2$ \\
ConvLSTM & 8($k\times1$)-1($k\times k$) & $\sim5\cdot10^2$ \\
ConvLSTM & 15($k\times1$)-1($k\times k$) & $\sim10^3$ \\
ConvLSTM & 150($k\times1$)-1($k\times k$) & $\sim10^4$ \\

ConvGRU & 2($k\times1$)-1($k\times k$) & $\sim10^2$ \\
ConvGRU & 10($k\times1$)-1($k\times k$) & $\sim5\cdot10^2$ \\
ConvGRU & 18($k\times1$)-1($k\times k$) & $\sim10^3$ \\
ConvGRU & 180($k\times1$)-1($k\times k$) & $\sim10^4$ \\

DiffSTG & ($T_h,T_f$)-1 & $\sim10^3$ \\
DiffSTG & ($T_h,T_f$)-4 & $\sim10^4$ \\
DiffSTG & ($T_h,T_f$)-11 & $\sim10^5$ \\
DiffSTG & ($T_h,T_f$)-40 & $\sim10^6$ \\

\hline
\end{tabular}

\vspace{1mm}

\begin{flushleft}
\footnotesize
$\ell(k\times k)$: \emph{$\ell$ is the number of hidden layers and $k$ refers to the kernel size.}\\
$(T_h,T_f)$-$\ell$: \emph{following \cite{HEY25}, $T_h$ and $T_f$ represent the number of historical and future graph feature states, respectively; $\ell$ is the number of hidden layers in the UGnet.}\\
$[w^{\ell}]$: \emph{$\ell$ is the number of hidden layers of the feed-forward architecture, and $w$ is the width of each hidden layer.}
\end{flushleft}

\end{table}

We compare the performance of stochastic feed-forward neural networks (SFFN) trained using MMAF-guided learning in the following section against several baseline models used in probabilistic forecasting. We describe them briefly in this section.
\begin{itemize}
    \item \textbf{ConvLSTM} is an auto-regressive model that combines different long short-term memory (LSTM) modules, and a recurrent neural network (RNN) module. These elements are combined together with convolution layers and a Gaussian head to produce a multivariate (Gaussian distributed) ensemble forecast output (\cite{XZH15,LQQ21} provide a detailed description of the architecture).
    \item \textbf{ConvGRU} is an auto-regressive model that combines different gate recurrent units (GRU) with operations of convolution, followed by a Gaussian head, in order to generate a (Gaussian distributed) ensemble forecast output, see \cite{TLX20} for further details.
    \item \textbf{DiffSTG} is a non-autoregressive diffusion model for spatio-temporal graphs forecasting that makes use of a denoising UGnet network, as described in detail in \cite{HEY25}.   
\end{itemize}

The baselines are defined by considering an increasing number of hidden layers, as reported in Table \ref{tab:model_parameters}. To ensure a fair comparison across methodologies on the same time horizons, the number of past features used to generate forecasts is fixed to $a$ for both the ConvLSTM and ConvGRU models. The kernel size $k=2pc+1$ through all the simulations, where $c=1$ for the GAU and the NIG datasets, $c=$ for the 2mT datasets, and $p=1$. 
The DiffSTG model is configured to use $T_h=p$ historical graph feature states to predict a single graph feature state, i.e., $T_f=1$, where $p$ is the hyperparameter reported in Table \ref{tab:summaryPac}.



\section{Results and Discussions}
\label{sec4}
In this section, we present the results for the various
experiments discussed in the previous sections. The source Python code for the algorithms in our workflow (feature extraction, training, validation, and inference) is available in the \href{https://github.com/Leonardo-Bardi/mmaf_guided_learning.git}{GitHub repository}. The experiments were run on a machine with access to 256GB RAM, 48 GB NVIDIA A40, and 2 AMD EPYC 7313 16-Core Processors.

\subsection{MMAF-guided learning applied to stochastic-feed forward neural networks}
We present in Table \ref{tab:workflow} the results related to the training, validation, and testing for the stochastic neural networks introduced in Section \ref{preprocessing}. 
In the column \textbf{Training}, we find the value of the target function (\ref{target}) and of the right-hand side of the PAC Bayesian bound (\ref{PAC}) (for a confidence level $\delta=0.025$) computed with respect to the generalized posterior distributions obtained at the end of the training routine. These results indicate that Algorithm \ref{algorithm2} mostly selects posterior distributions at each spatial position that correspond to non-vacuous PAC Bayesian bounds, i.e., they are less than $\epsilon=3$ in our experiment and therefore generalize well with probability at least $95\%$. We obtain comparable performance across datasets and architectures, except for the architectures $300^2$ (just in the case of the 2mT daraset) and $800^3$, for which we obtain vacuous PAC Bayesian bounds, indicating that overparameterization with respect to network width takes a toll on the generalization performance of our posterior distribution. Testing the performance of MMAF-guided learning with respect to overparameterized deep neural networks is outside the scope of the present paper.

In the columns related to the \textbf{Validation} of our methodology, we report the value of the selected reference distribution along with the corresponding CRPS and MSE. Finally, in the columns related to the \textbf{Test}, we find the value of the CRPS and MSE averaged with respect to the time horizons. We note that the CRPS values are, in most cases, of the same order of magnitude and within the same range as those observed in the validation set, indicating that the ensemble forecast remains calibrated in time. On the other hand, the MSE increases during evaluation on the test set, indicating that the forecasting quality degrades over time.

\begin{table}
\centering
\small
\resizebox{0.99\columnwidth}{!}{ 
\begin{tabular}{c c|cc|ccc|ccc}
\toprule
\multicolumn{2}{c}{} & \multicolumn{2}{c}{\textbf{Training}} & \multicolumn{3}{c}{\textbf{Validation}} & \multicolumn{3}{c}{\textbf{Test}} \\
Dataset & Architecture & Target (\ref{target}) & PAC bound (\ref{PAC}) & $\pi$ & CRPS & MSE & Horizons & CRPS & MSE  \\
\midrule
\multirow{6}{*}{GAU}
 & $[10^2]$ & 0.1131 & 0.487 & $N(0,1/ 30 )$ & 0.0533 & 0.0146 & 100 & 0.059 & 0.0158 \\
 & $[10^5]$ & 0.1120 & 0.4885 & $N(0,1/ 30 )$ & 0.0537 & 0.0147 & 100 & 0.0592 & 0.0157 \\
 & $[30^2]$ & 0.1198 & 0.4970 & $N(0,1/ 50 )$ & 0.0515 & 0.0157 & 100 & 0.0601 & 0.0172 \\
 & $[100^2]$ & 0.1623 & 0.5317 & $N(0,1/ 90 )$ & 0.0546 & 0.0184 & 100 & 0.0588 & 0.0188\\
 & $[300^2]$ & 0.3538 & 0.8122 & $N(0,1/ 150 )$ & 0.0499 & 0.0206 & 100 & 0.0617 & 0.023 \\
 & $[800^3]$ & 4.2854 & 2.4$\times10^{13}$ & $N(0,1/ 210 )$ & 0.0794 & 0.0762 & 100 & 0.0918 & 0.0847  \\
\midrule
\multirow{6}{*}{NIG}
 & $[10^2]$ & 0.0308 & 0.4061 & $N(0,1/ 210 )$ & 0.0042 & 0.0002 & 100 & 0.022 & 0.0018 \\
 & $[10^5]$ & 0.0313 & 0.4064 & $N(0,1/ 210 )$ & 0.0042 & 0.0002 & 100 & 0.0216 & 0.0018  \\
 & $[30^2]$ & 0.0366 & 0.4111 & $N(0,1/ 210 )$ & 0.0054 & 0.0004 & 100 & 0.0208 & 0.002 \\
 & $[100^2]$ & 0.0703 & 0.4402 & $N(0,1/ 210 )$ & 0.0095 & 0.0015 & 100 & 0.0225 & 0.0032 \\
 & $[300^2]$ & 0.2778 & 0.7767 & $N(0,1/ 210 )$ & 0.0183 & 0.0059 & 100 & 0.0302 & 0.0077 \\
 & $[800^3]$ & 4.9759 & 1.92$\times 10^{16}$ & $N(0,1/ 210 )$ & 0.0774 & 0.074 & 100 & 0.0816 & 0.0769 \\
\midrule
\multirow{6}{*}{2mT}
&  $[10^2]$ & 0.8268 & 1.327 & $N(0,1/ 10 )$ & 0.9144 & 1.4499 & 40 & 0.6835 & 1.2444\\
& $[10^5]$ & 0.8462 & 1.3462 & $N(0,1/ 10 )$ & 0.9477 & 1.3918 & 40 & 0.6982 & 1.1753\\
& $[30^2]$ & 1.0783 & 1.5811 & $N(0,1/ 10 )$ & 0.6988 & 2.5977 & 40 & 0.6029 & 2.3616\\
& $[100^2]$ & 1.0965 & 1.661 & $N(0,1/ 30 )$ & 0.7839 & 1.8379 & 40 & 0.6279 & 1.5409 \\
 & $[300^2]$ & 2.0737 & 12.0285 & $N(0,1/ 50 )$ & 0.6259 & 2.0289 & 40 & 0.6214 & 2.1892\\
& $[800^3]$ & 8.9462 & 3.25$\times 10^{16}$ & $N(0,1/ 130 )$ & 0.6298 & 2.1513 & 40 & 0.6343 & 2.5071\\
\bottomrule
\end{tabular}}
\caption{Performance of MMAF-guided learning in training, validation, and test for the GAU, NIG, and 2mT data sets. The value of the target function and the PAC Bayesian bound in training are averaged across the spatial positions considered in the study; the CRPS is averaged across time horizons and spatial positions; the MSE is averaged across time horizons.}
\label{tab:workflow}
\end{table}


\subsection{Comparison of model results}
 
We compare MMAF-guided learning applied to SFFNs with the baseline ConvLSTM, ConvGRU, and DiffSTG models described in Section \ref{baselines}. We also consider an alternative version of DiffSTG, namely \emph{DiffSTG embedded}, which uses, in its training, the set of features selected by the Algorithm \ref{algorithm1}. With this, we aim to determine whether the feature-extraction methodology we introduce can improve the performance of a diffusion model. In fact, it is important to note that different datasets are used for training ConvLSTM, ConvGRU, and DiffSTG models compared to the SFNNs used in the study. The reason is that for the former models, we do not use the optimization routine described in Algorithm \ref{algorithm2}. Such models use the entire raster dataset for training. For the GAU and NIG, this corresponds to $m=999192$, and for the 2mT data set, to $m=87672$. The training data set used in MMAF-guided learning is then significantly smaller than that used for the baseline models. Regarding the time horizons considered in our forecasting tasks, we use those corresponding to the test data set for an SFNN described in Section \ref{emb}, to maintain consistency in the forecast evaluation. 

\begin{figure}[!htbp]
\centering
\includegraphics[scale=0.4]{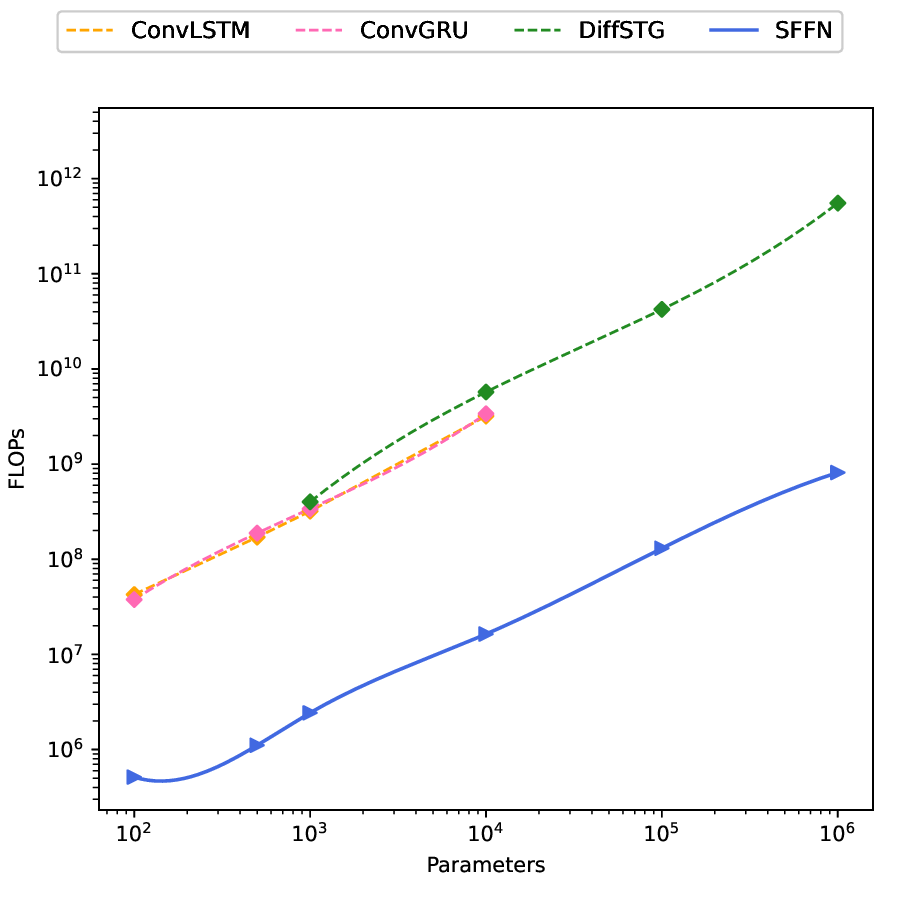}
\caption{\footnotesize Comparison of the computational cost in FLOPs of a training iteration for an ensemble of $8$ SFNNs against the baseline models in the case of the 2mT data set.}
\label{comparison:flops}
\end{figure}

Figure \ref{comparison:flops} shows the number of FLOPs, i.e., floating point operations per second, used during a training iteration for the stochastic feed-forward neural networks and the baseline models used in the case of the 2mT dataset. Figure \ref{comparison:flops} shows that the training phase in MMAF-guided learning is more computationally efficient than in the other setups. The same pattern is observed across different architectures also in the case of the GAU and NIG data sets. 

\begin{figure}[!htbp]
    \includegraphics[width=\columnwidth]{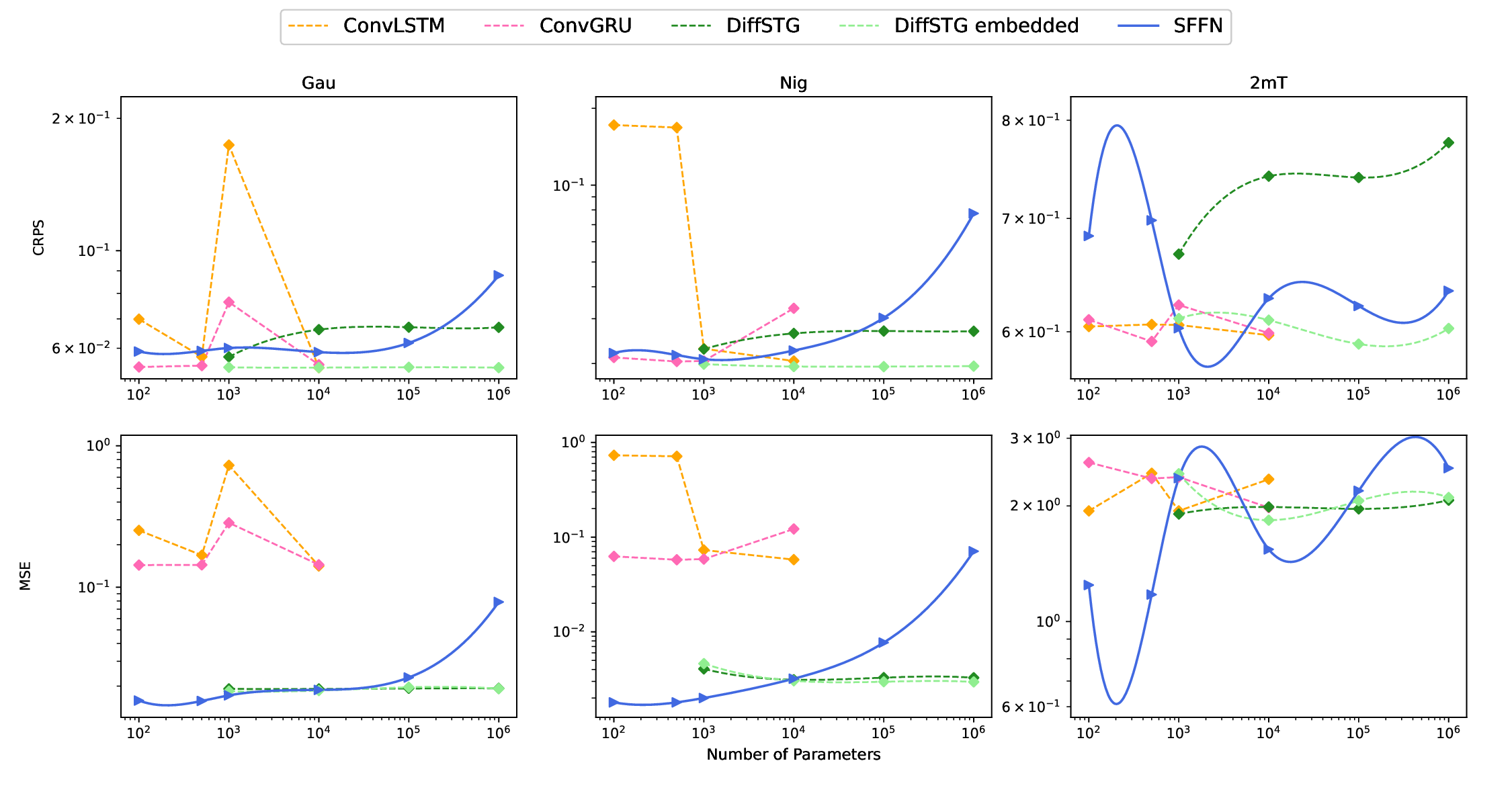}
    \caption{Comparison between the ensemble of $8$ SFNNs and the baseline models on the test set for the datasets GAU, NIG, and 2mT with respect to the CRPS, the MSE and the number of model parameters.}
    \label{comparison}
\end{figure}

We then analyze the CRPS and MSE score of the ensemble forecasts obtained for all architectures described in Table \ref{tab:model_parameters} across the datasets GAU, NIG, and 2mT using the CRPS and MSE and report such results in Figure \ref{comparison}. We are mostly interested in assessing whether our ensemble forecasts are calibrated over time, so we select a list of models with the lowest CRPS for our comparative analysis later in the section (see Table \ref{tab:bestofthebest}). For these models, we also report the ensemble forecast MSE over time. When evaluating model performance using MSE, it is important to note that it measures the forecast's bias and variance relative to the test set, providing an assessment of how forecasts degrade over time. In general, shallow networks achieve better performance than deeper ones, which typically overparameterize and therefore exhibit higher variance (see Figure \ref{comparison}), except in the diffusion framework.

\begin{table}[h!]
\centering
\resizebox{\textwidth}{!}{
\begin{tabular}{|c |ccc|ccc|ccc|}
\toprule
\multirow{2}{*}{Model} & \multicolumn{3}{c|}{GAU}  & \multicolumn{3}{c|}{NIG} &  \multicolumn{3}{c|}{2mT} \\
& Architecture & CRPS & MSE & Architecture & CRPS & MSE & Architecture & CRPS & MSE \\
\midrule
SFFN  &$[100^2]$ & 0.0588 & 0.0188 & $[30^2]$ & 0.0208 & 0.0020 & $[30^2]$ & 0.6029 & 1.5332 \\
&&&&&&&&&\\
\multirow{2}{*}{ConvLSTM} & 150($3\times1$)- & \multirow{2}{*}{0.0542} & \multirow{2}{*}{0.0200}
 & 150($3\times1$)- & \multirow{2}{*}{0.0205} & \multirow{2}{*}{0.05791}
 & 2($9\times1$)- & \multirow{2}{*}{0.5879} & \multirow{2}{*}{1.4515} \\
 & 1($3\times 3$) & & & 1($3\times 3$) & & & 1($9\times 9$) & & \\
&&&&&&&&&\\
\multirow{2}{*}{ConvGRU} & 2($3\times1$)- & \multirow{2}{*}{0.0544} & \multirow{2}{*}{0.0206}
 & 10($3\times1$)- & \multirow{2}{*}{0.0204} & \multirow{2}{*}{0.0033}
 & 18($9\times1$)- & \multirow{2}{*}{0.5921 } & \multirow{2}{*}{1.5319} \\
 & 1($3\times 3$) & & & 1($3\times 3$) & & & 1($9\times 9$) & & \\
 &&&&&&&&&\\
DiffSTG &(1,1)-1 & 0.0573 & 0.0191 & (1,1)-1 & 0.0228 & 0.0041  & (1,1)-1 & 0.6668 & 1.9044\\
&&&&&&&&&\\
DiffSTG emb. & (1,1)-40 & 0.0542 & 0.0192 & (1,1)-11 & 0.0195 & 0.003 & (1,1)-11 & 0.59 & 2.0616\\
\bottomrule
\end{tabular}
}

\caption{: Models with lowest CRPS between the architecture described in Table \ref{tab:model_parameters}.}
\label{tab:bestofthebest}
\end{table}

\begin{table}[h!]
\centering
\resizebox{\textwidth}{!}{
\begin{tabular}{|cc | c |c c|c c|c c| cc |}
\toprule 
&&& \multicolumn{2}{c}{SFFN vs. LSTM} & \multicolumn{2}{c}{SFFN vs. GRU} & \multicolumn{2}{c|}{SFFN vs. DiffSTG} & \multicolumn{2}{c|}{SFFN vs. DiffSTG emb.}\\
&&& Confidence interval & DM p-value & Confidence interval & DM p-value & Confidence interval & DM p-value & Confidence interval & DM p-value \\
\hline
\multirow{18}{*}{\rotatebox[origin=c]{90}{dataset}}& \multirow{6}{*}{\rotatebox[origin=c]{90}{GAU}} & \multirow{3}{*}{\rotatebox[origin=c]{90}{CRPS}}&&&&&&&&\\
&&&$[ 0.0029  ,  0.0067 ]$  & 0.99 & $ [0.0027 ,  0.0069 ]$ & 0.99 & $[ -0.01254 ,  -0.0033 ]$ &  0.0012 & $[ 0.0038  ,  0.0060 ]$ & 0.9999\\
&&&&&&&&&&\\
& & \multirow{3}{*}{\rotatebox[origin=c]{90}{MSE}} &&&&&&&& \\
&&&$[ -0.0019  ,  -0.0004 ]$ & 0.0006 & $[ -0.0025  ,  -0.0009 ]$  & $1.7\times 10^{-6}$ & $[ -0.0017  ,  0.0007]$ & 0.2987 & $[ -0.0000  ,  0.0010 ]$ & 0.9137\\
&&&&&&&&&&\\
\hline
& \multirow{6}{*}{\rotatebox[origin=c]{90}{NIG}} & \multirow{3}{*}{\rotatebox[origin=c]{90}{CRPS}}&&&&&&&&\\
&&&$[ -0.0004  ,  0.0009 ]$ & 0.7672 & $[ -0.0003  ,  0.0010 ]$ & 0.8484 & $[ -0.0102  ,  -0.0030]$ & 0.001 & $[ 0.0002  ,  0.0015 ]$ & 0.9913\\
&&&&&&&&&&\\
& & \multirow{3}{*}{\rotatebox[origin=c]{90}{MSE}} &&&&&&&&  \\
&&&$[ -0.0013  ,  -0.0013 ]$ & $1\times 10^{-16}$ & $[ -0.0013 ,  -0.0013 ]$ & $1\times 10^{-16}$ & $[ -0.0019 ,  -0.0007 ]$ & $3.39\times 10^{-5}$ & $[ -0.0026  ,  -0.0025 ]$ & $1\times 10^{-16}$ \\
&&&&&&&&&&\\
\hline
& \multirow{6}{*}{\rotatebox[origin=c]{90}{2mT}} & \multirow{3}{*}{\rotatebox[origin=c]{90}{CRPS}} &&&&&&&&\\
&&& $[ 0.0049  ,  0.0439 ]$ & 0.99& $[ 0.0229  ,  0.0567 ]$ &  0.99 & $[ -0.0576  ,  -0.0176 ]$ & 0.1389 & $[ 0.0132  ,  0.0459 ]$ & 0.99\\
&&&&&&&&&&\\
&& \multirow{3}{*}{\rotatebox[origin=c]{90}{MSE}} &&&&&&&&\\
&&& $[ 0.2272  ,  0.4313 ]$ &  0.99 & $[ -0.3041  ,  -0.1293 ]$ & 0.0247 & $[ 0.2623  ,  0.4967 ]$ & 0.99 & $[ -0.0286  ,  0.2362 ]$ & 0.99\\
&&&&&&&&&&\\
\hline
\end{tabular}
}
\caption{: Comparisons of the ensemble of SFFNs against the baseline models. For each dataset, the best models from Table \ref{tab:bestofthebest} are compared using $\Delta_{MSE}$ and $\Delta_{CRPS}$. For such quantities, a $95\%$ confidence interval and the p-value of the unilateral Diebold-Mariano test are shown.}
\label{tab:testofthebest}
\end{table}

In our comparative study, we assess the statistical significance of the models' forecasting performance in Table \ref{tab:bestofthebest}.
To this aim, we make use of a standard moving block bootstrap algorithm \citep[Chapter~2]{bootstrap} to compute a $95\%$ confidence interval for the difference of both the average MSE and CRPS across time horizons, between MMAF-guided learning and each baseline model: that is,
\begin{align*}
\Delta_{MSE} &= \frac{1}{H}\sum_{h=1}^H MSE_h(h_\theta) - MSE_h(h_B) \\
\Delta_{CRPS}  &= \frac{1}{H}\sum_{h=1}^H \frac{1}{R}\sum_{r=0}^R\widehat{CRPS}^r_h(h_\theta) - \widehat{CRPS}^r_h(h_{B}),
\end{align*}
where $h_B$ represents a baseline model from those in Table \ref{tab:bestofthebest}.
Finally, a unilateral Diebold-Mariano test with bias correction \citep{dmtest2}, over the same quantities $\Delta_{MSE}$ and $\Delta_{CRPS}$ is performed, to assess if there is statistical evidence that using an ensemble of SFNNs results in better performance than the baselines. As a rule of thumb, negative bootstrap intervals, along with p-values less than $0.05$ (corresponding to a test rejection at level $\alpha=5 \%$), indicate that the SFNNs trained with MMAF-guided learning achieve better performance than their baseline counterparts. A bootstrap interval that is positive or contains zero represents that the baseline model under analysis achieves better or comparable performance than the SFNNs, respectively. The results of the comparison analysis are summarized in Table \ref{tab:testofthebest}.

In terms of CRPS score, the SFNNs within the GAU and NIG frameworks achieve lower scores with shallower architectures than the baselines. For the 2mT dataset, we observe that overparameterization also lowers the score for the SFNNs. Considering the best model setups observed during testing, i.e., the models with the lowest CRPS, a comparison of their performance, as reported in Table \ref{tab:testofthebest}, depends on the data set under analysis.
For the GAU data, the convolutional setups and the diffusion model with features determined as in Section \ref{sec3} achieve better performance than the SFNNs, whereas the SFNNs have better forecasting performance than the diffusion model alone. In the case of Gaussian distributed data, it is to be expected that the performance of the ConvLSTM or ConvGRU is better than the one produced by methodologies that make no assumptions on the shape of the predictive distribution, such as DiffSTG or our SFNNs.
For the NIG data, the performance of the convolutional networks begins to deteriorate, and we can conclude that the forecasts have comparable performance to those produced by the ensemble of SFNNs. Again, we have statistical evidence that the SFNNs have better forecasting performance than the DiffSTG setup, which, however, when equipped with the same features used to train the SFNNs (DiffSTG embedded) returns to being the model with the most reliable forecasts.
Finally, for the 2mT data set, we observe that the SFNNs perform better than the diffusion models, whereas the convolutional setups and the DiffSTG-embedded model achieve better performance. The differences in performance across datasets may be due to the number of data available for training or to the actual dependence structure and stationarity of the data. Overall, the DiffSTG model embedded trained on the features extracted in MMAF-guided learning shows better forecasting performance across all datasets compared to the one trained on the entire raster dataset, with stable performance as the number of parameters increases (i.e., for the deepest architecture presented in Table \ref{tab:model_parameters}). This suggests that the feature-extraction routine used in MMAF-guided learning can also be used as a standalone component in the training of diffusion models. 
Focusing now on the MSE, we observe that the ensembles of SFNNs described in Table \ref{tab:bestofthebest} can outperform the convolutional models in the case of the GAU and NIG data sets. The same neural networks achieve comparable performance to the diffusion setups for the GAU and better performance for the NIG. On the other hand, in the 2mT data set, the LSTM and DiffSTG framework seem to prevail in terms of MSE.

\begin{figure}[!htbp]
\centering
\resizebox{.95\textwidth}{!}{%
\begin{tabular}{c}
\begin{subfigure}{\textwidth}
\includegraphics[width=.47\textwidth]{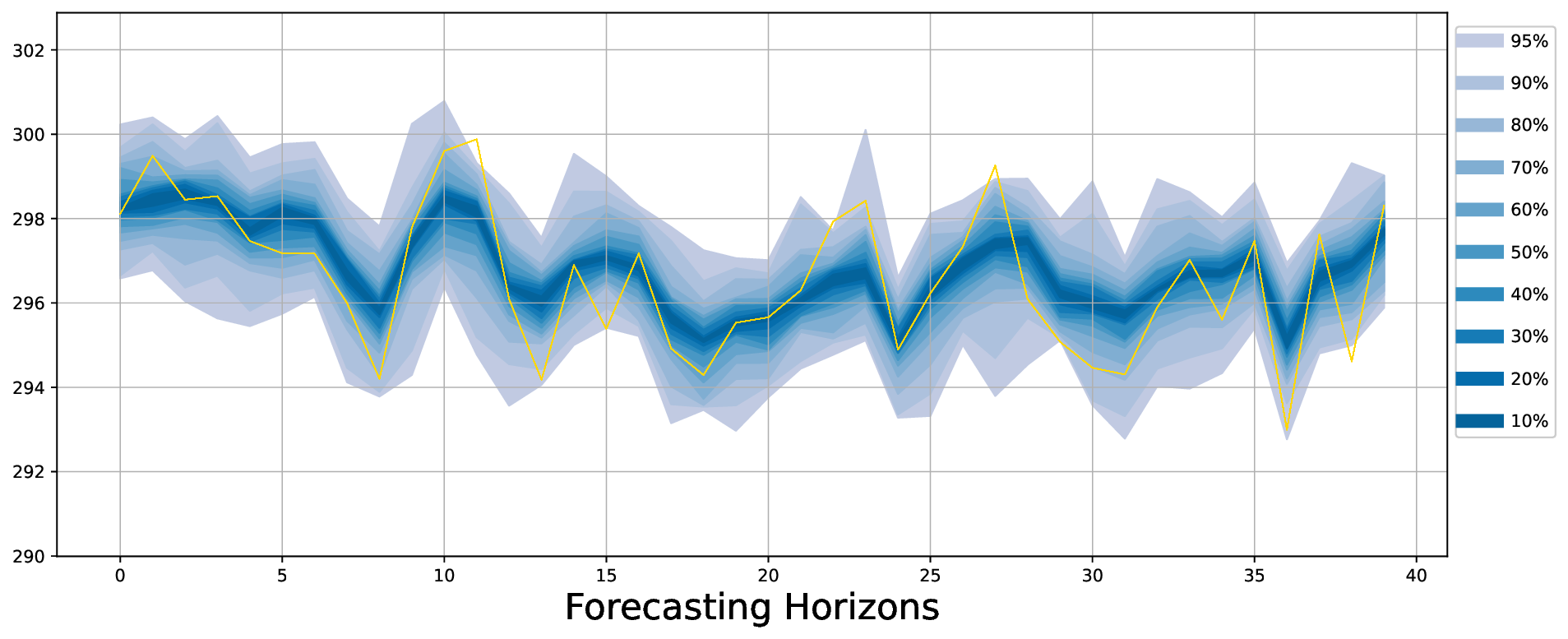}
\includegraphics[width=.47\textwidth]{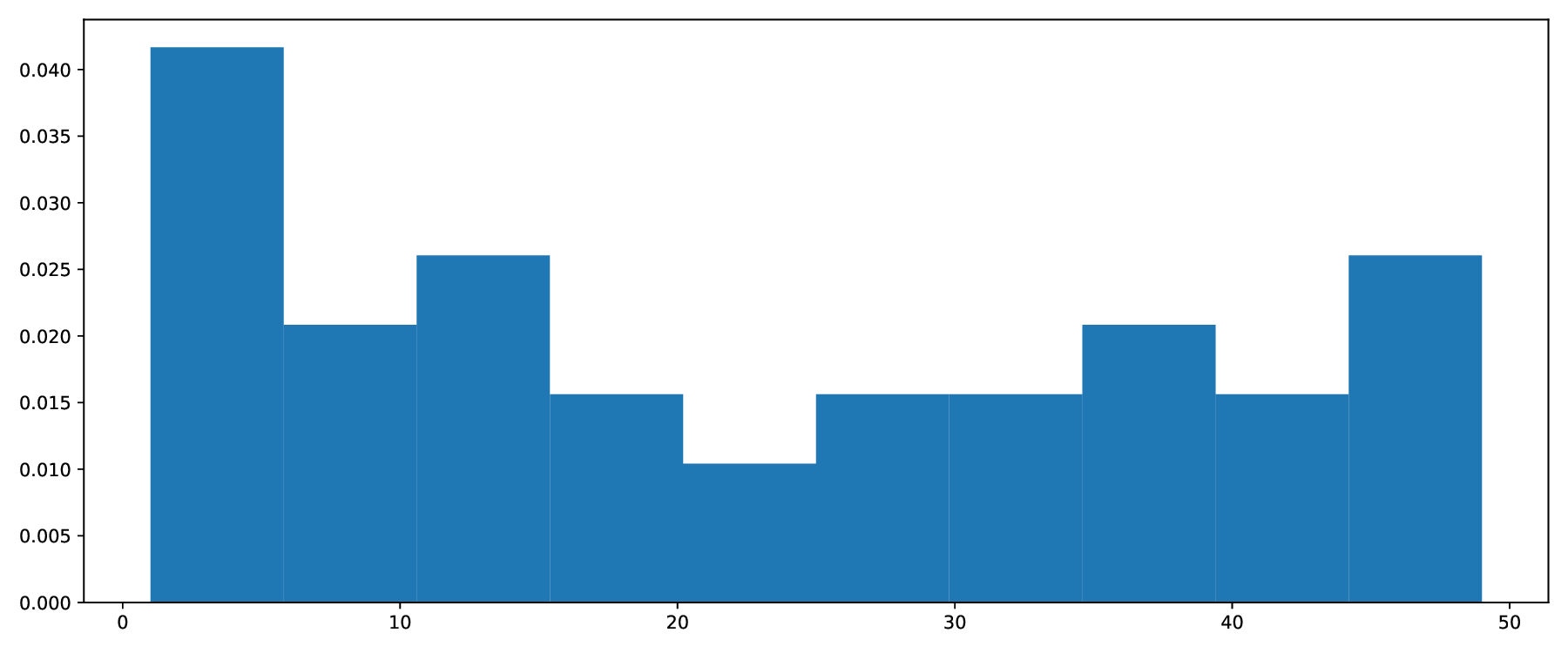}
\caption*{SFFN: $[30^2],\pi\sim N(0,1/10)$}
\end{subfigure}%
\\
\begin{subfigure}{\textwidth}
\includegraphics[width=.47\textwidth]{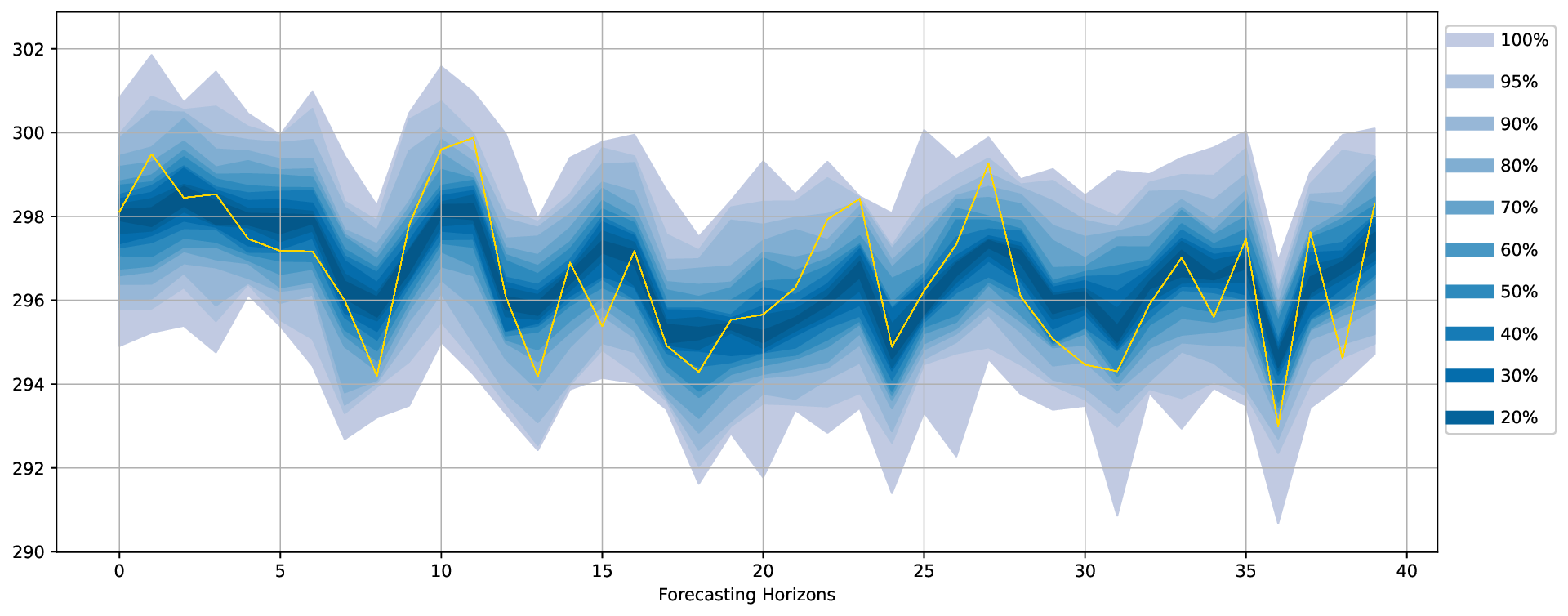}%
\includegraphics[width=.47\textwidth]{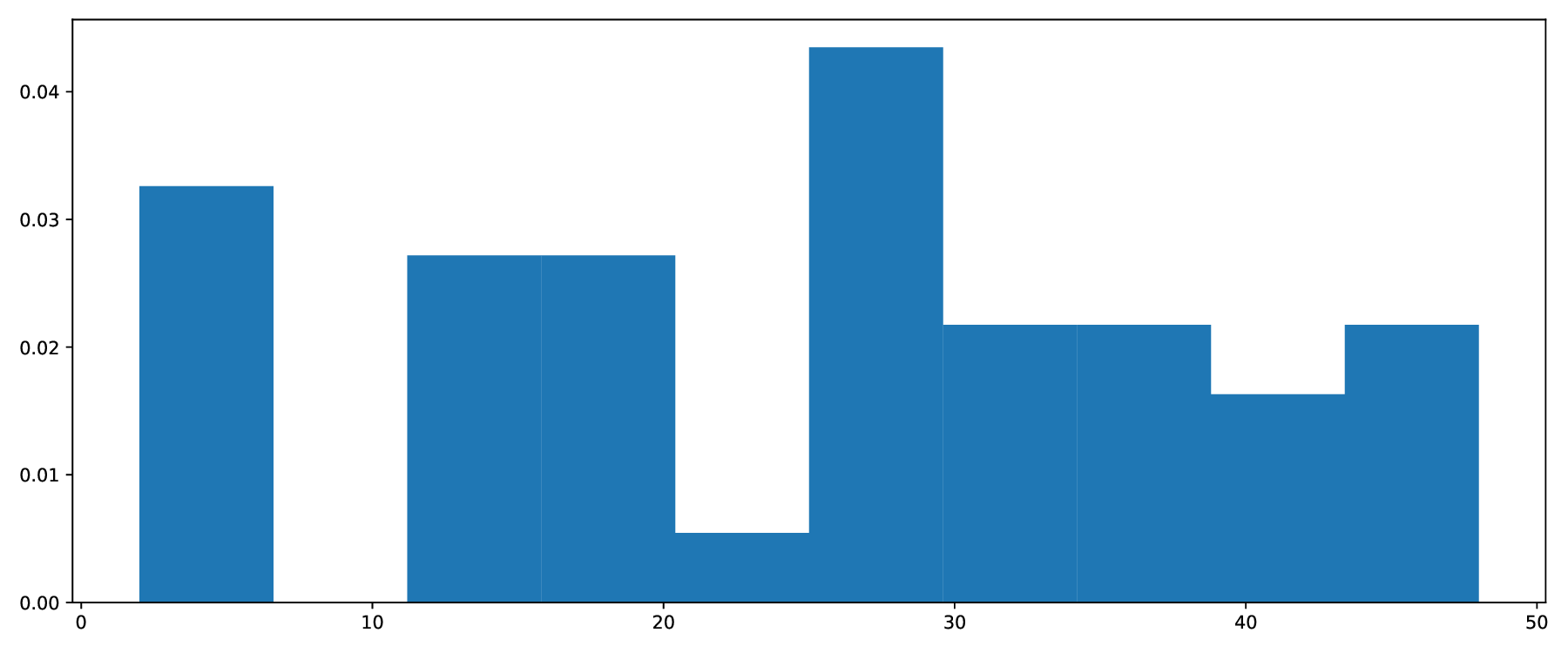}
\caption*{ConvGRU: $18(3\times1)$-$1(3\times3)$}
\end{subfigure}%
\\
\begin{subfigure}{\textwidth}
\includegraphics[width=.47\textwidth]{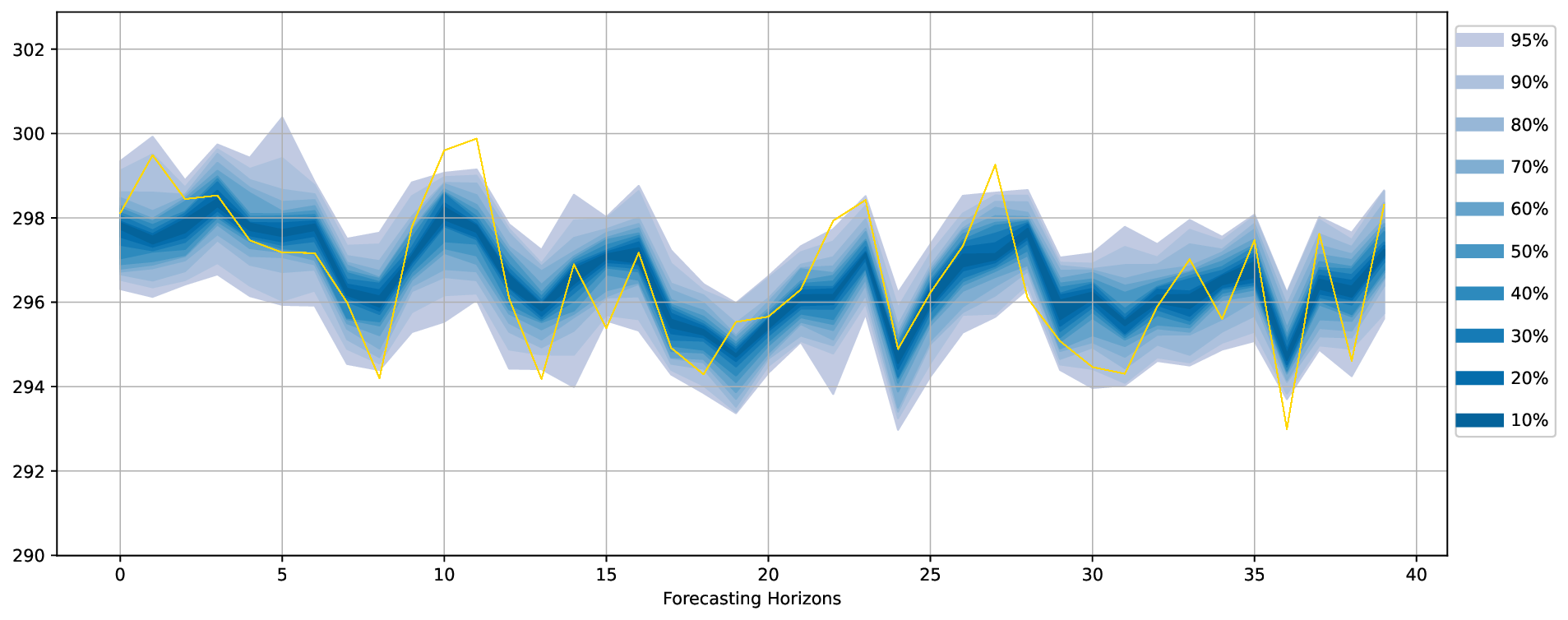}%
\includegraphics[width=.47\textwidth]{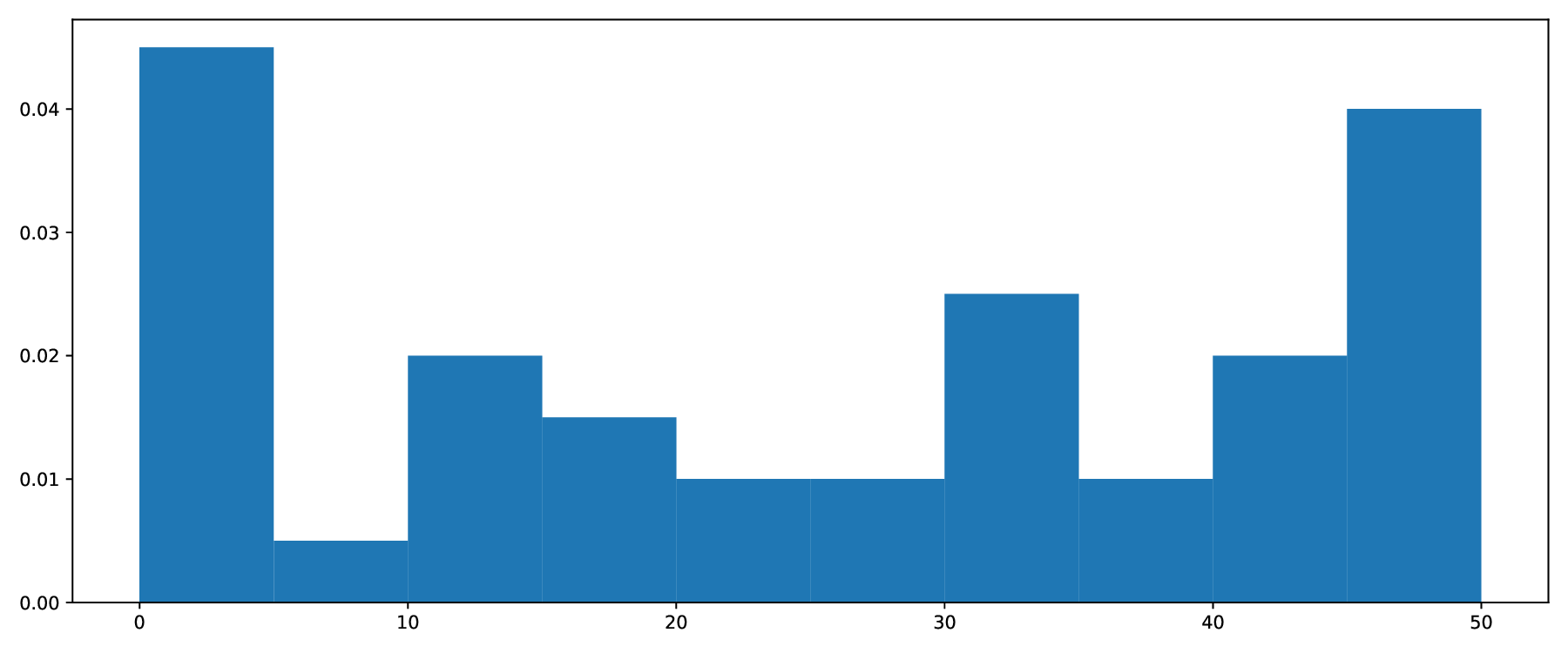}
\caption*{DiffSTG: $(1,1)$-$1$}
\end{subfigure}%
\\
\begin{subfigure}{\textwidth}
\includegraphics[width=.47\textwidth]{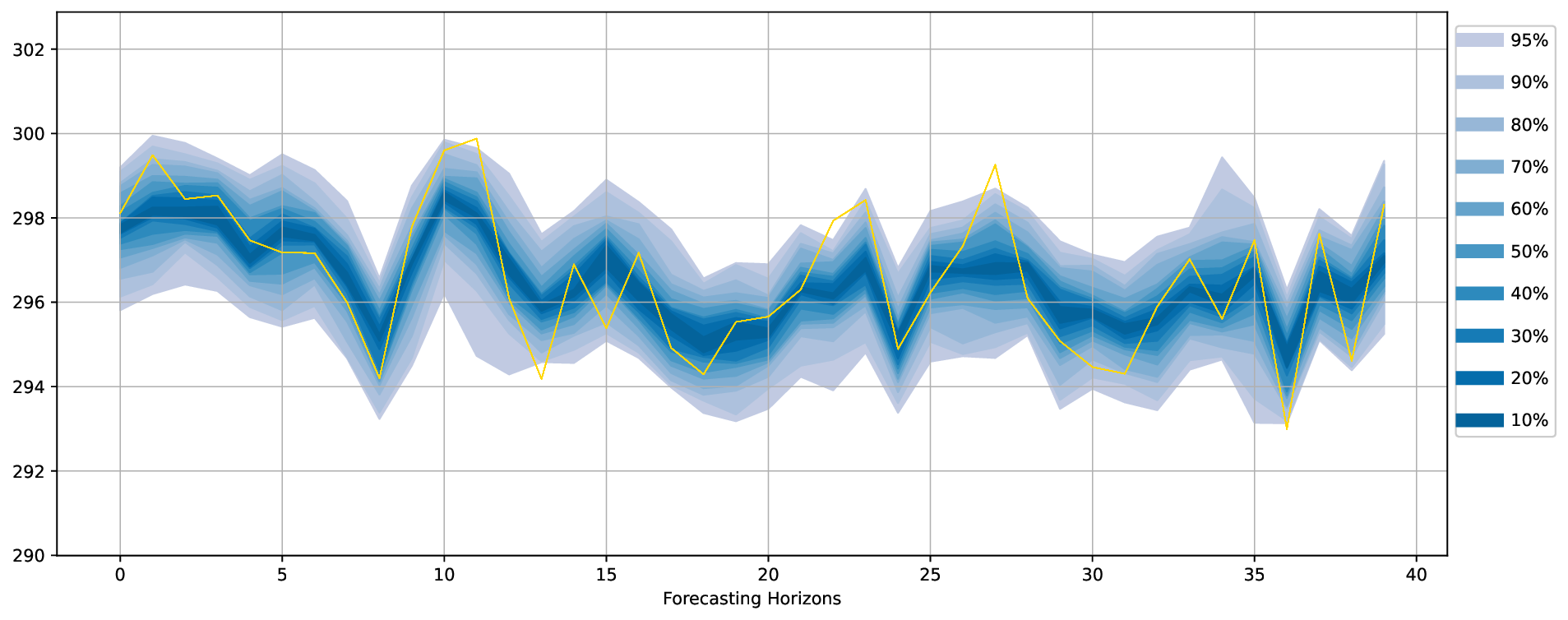}%
\includegraphics[width=.47\textwidth]{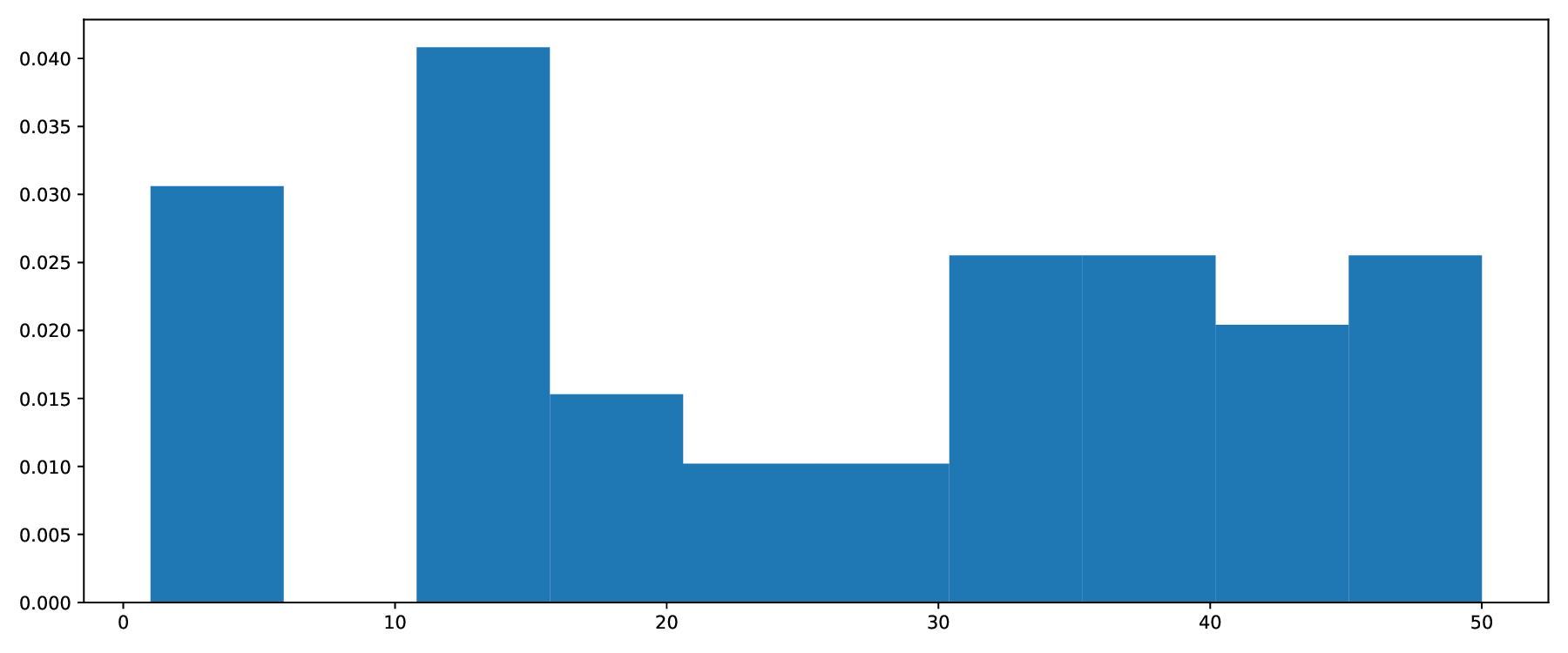}
\caption*{DiffSTG embedded: $(1,1)$-$11$}
\end{subfigure}%
\end{tabular}%
}
\caption{Comparison of ensemble forecasts (left) and their corresponding PIT histogram over time (right), generated using an SFNN, ConvLSTM, and DiffSTG for the $8$th spatial coordinate in our study, respectively, across $40$ horizons versus the 2mT test dataset in gold. The percentages in the legend indicate the quantile ranges of the ensemble forecasts.}
\label{rankhist}
\end{figure}

As last proof of the forecasting quality of MMAF-guided learning, we plot in Figure \ref{rankhist} the prediction intervals (via empirical quantile ranges of the ensemble forecasts) corresponding to a nominal value from $10\%$ to $95\%$ and a pooled PIT histogram across $40$ time horizons for the $8$th spatial position in the case of the 2mT data set and the related models in Table \ref{tab:bestofthebest}. Both are \emph{visual diagnostic tools} to evaluate the calibration of ensemble forecast distributions.
As can be observed, the test data set lies within the prediction intervals and respects their nominal value across setups. The prediction intervals generated by the SFNNs typically follow the trajectory of the test data more closely than in the other setups. The PIT histograms have the best behavior (closeness to a uniform distribution's histogram) for the SFNN. A similar behavior is observed for the prediction interval and the PIT histograms at every spatial position analyzed in the raster, as well as in the GAU and NIG frameworks. 

All in all, the results of this section highlight what has been observed so far when using a \emph{theory-guided machine learning methodology}. Guiding the training routine with information from a data generation process enables the design of methodologies with performance comparable to that of a \emph{pure data-driven generative model} using a smaller amount of data and a more efficient training routine.

\section{Conclusions and future work}
\label{sec5}
We present a novel workflow called \emph{MMAF-guided learning} for feature extraction, training, validation, and inference of stochastic feed-forward neural networks, which produces probabilistic forecasts for a 2D raster dataset across multiple time horizons. Our methodology assumes that an STOU process generates the data. The latter is a special case of a class of fields called MMAF, which in turn belongs to the class of the Ambit fields described in \cite{Ambit}. The latter is a class of random fields that can be employed to model non-stationary phenomena. Our probabilistic forecasting methodology has the potential to be applied to such a class, once the embedding and feature learning from the data are modified to accommodate non-stationary data. The latter is a promising direction for research and will be addressed in future work. Future directions of our work also investigate a PAC Bayesian bound for MMAF-generated data that allows us to consider unbounded loss functions and therefore define a more versatile optimization routine for the training of an ensemble of stochastic feed-forward neural networks.

\bibliographystyle{unsrtnat}
\bibliography{ImmafReviewed}

\begin{appendices}

\section{Estimating the parameters of an STOU process and the decay rate of its $\theta$-lex weakly dependent coefficients}
\label{est}


To enable automatic feature extraction in MMAF-guided learning, we first need to estimate the parameters $c$, which we call \emph{speed of information propagation}, and the parameter $A$ called the \emph{mean-reverting parameter}. Such parameters are estimated using normalized spatial and temporal variograms defined as 
\begin{align*}
	\gamma^S(u)&:= \frac{\E((\bb{Z}_t(x)-\bb{Z}_t(x-u))^2)}{Var(\bb{Z}_t(x))}=2 \Big( 1-\exp\Big(-\frac{Au}{c} \Big)\Big),
\\
	\gamma^T(s)&:= \frac{\E((\bb{Z}_t(x)-\bb{Z}_{t-s}(x))^2)}{Var(\bb{Z}_t(x))}=2(1-\exp(-As)).
\end{align*}
Let $N(u)$ be the set containing all the pairs of indices at spatial distance $u>0$ and with same temporal instance, and $N(s)$ be the set containing the pairs of indices where the observation times are at distance $s>0$ and having same spatial position. 
The empirical normalized spatial and temporal variograms are then defined as follows:
\begin{align*}
	&\bb{\hat{\gamma}}^S(u)=\frac{1}{|N(u)|} \sum_{i,j \in N(u)} \frac{(\bb{Z}_{t_i}(x_i)-\bb{Z}_{t_j}(x_j))^2}{\bb{\hat{k}_2}} 
    \\
	&\bb{\hat{\gamma}}^T(s)=\frac{1}{|N(s)|} \sum_{i,j \in N(s)} \frac{(\bb{Z}_{t_i}(x_i)-\bb{Z}_{t_j}(x_j))^2}{\bb{\hat{k}_2}} 
\end{align*}
where $|N(u)|$ and $|N(s)|$ simply represent the number of the obtained pairs, and $\boldsymbol{\hat{k}_2}$ is the empirical variance
$$
\bb{\hat{k}}_2=\frac{1}{\tilde{D}-1}\sum_{(t_i,x_i)\in \mathbb{T} \times \mathbb{L}} \bb{Z}_{t_i}^2(x_i),
$$
where $\tilde{D}$ denotes the sample size of all space-time observations in the raster dataset. By matching the empirical and the theoretical forms of the normalized variograms, we can estimate $A$ and $c$ by employing the estimators:
\begin{equation}\label{almut1}
	\bb{A^{*}}=-\tau^{-1} \log \Big( 1-\frac{\bb{\hat{\gamma}^T}(\tau)}{2}  \Big), \,\,\, \textrm{and} \,\,\, \bb{c^{*}}=-\frac{\bb{A^*}u}{\log \Big(1-\frac{\bb{\hat{\gamma}^S}(u)}{2}\Big)}.
\end{equation}

Once we have an estimation of the parameters $A$ and $c$, the next step consists in choosing the parameter $a$.
This choice is made using a selection rule related to an estimate of the $\theta$-lex coefficients of the underlying field, see (\ref{choice}). We then need an estimate of such coefficients. For the STOU process, we can compute the $\theta$-lex coefficients explicitly, using Proposition 2.17 in \cite{mmafGL}. Let $\Lambda$ be an $\R$-valued L\'evy basis with characteristic quadruplet $(\gamma, \sigma^2,\nu,\pi)$. If $\int_{|x| >1} x^2 \, \nu(dx) < \infty$, $\gamma+\int_{|x| >1} x \, \nu(dx)=0$, and for $\psi=\psi(r):= \frac{r}{\sqrt{2(c^2+1)}}$, then $\bb{Z}$ is $\theta$-lex weakly dependent with coefficients
	\begin{align*}
		\theta_{lex}(r)&\leq 2 \Big( Var(\Lambda^{\prime}) \int_{A_0(0) \cap (A_0(\psi) \cup A_0(-\psi))} \exp(2As) \, ds\, d\xi \Big)^{\frac{1}{2}} \\
		&\leq 2\Big( 2 Var(\Lambda^{\prime}) \int_{A_0(0) \cap A_0(\psi)}  \exp(2As) \, ds\, d\xi \Big)^{\frac{1}{2}}\\
		&=2\Big( 2 Var(\Lambda^{\prime}) \int_{-\infty}^{-\frac{\psi}{2c}} \int_{\psi+cs}^{-cs} \exp(2As) \, ds\, d\xi \Big)^{\frac{1}{2}}=  \Big( \frac{c}{A^2} Var(\Lambda^{\prime}) \, \exp\Big( \frac{-A \psi}{c} \Big) \Big)^{\frac{1}{2}} \\
		&= 2\Big( \,\frac{c}{A^2} Var(\Lambda^{\prime})  \,\exp\Big( -\,\underset{2\lambda}{\underbrace{\frac{A \min(2,c)}{c}}}\,r \Big) \Big)^{\frac{1}{2}}\\
		&= 2\sqrt{2Cov(\bb{Z}_0(0),\bb{Z}_0(r\min(2,c)))}
	\end{align*}
	
	where $\lambda >0$. 

By estimating the parameter $\lambda$, we then get an estimation of the decay rate of the $\theta$-lex coefficients. This is done using the following plug-in estimator.

\begin{equation}
\label{plug-in}
\bb{\lambda^{*}}=\frac{\bb{A^*}\min(2,\bb{c^*})}{2\bb{c^*}}.
\end{equation}

\section{Dependence Structure of the stochastic process $\mathbf{S}$ }
\label{Inher}
In this section, we prove that the following statement
\begin{lemma}
\label{inerS}
The stochastic process $\bb{S}=(\bb{S}_i)_{i\in\Z}:=(\bb{X}_i,\bb{Y}_i)_{i\in\Z}$ related to the position $x^*$ and with values in $\R^{D+1}$, as defined in Section \ref{emb}, is $\theta$-weakly dependent.
\end{lemma}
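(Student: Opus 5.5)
We will show that $\theta$-weak dependence of $\bb{S}$ is inherited directly from the $\theta$-lex weak dependence of $\bb{Z}$ (Definition \ref{thetadependence}). We use the standard notion of $\theta$-weak dependence for a stationary sequence with values in $\R^{D+1}$. It requires
\[
\theta_{\bb{S}}(k)=\sup_{u,v}\sup \frac{|Cov(F(\bb{S}_{i_1},\ldots,\bb{S}_{i_u}),G(\bb{S}_{j_1},\ldots,\bb{S}_{j_v}))|}{\|F\|_\infty\, v\, Lip(G)}\to 0
\]
as $k\to\infty$. Here $i_1\le\ldots\le i_u<i_u+k\le j_1\le\ldots\le j_v$, $F$ is bounded on $(\R^{D+1})^u$, and $G$ is bounded and Lipschitz on $(\R^{D+1})^v$. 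The plan is to rewrite each such covariance as a covariance of functionals of finite marginals of $\bb{Z}$ indexed by lexicographically ordered point sets. We then check that the distance between the two point sets grows linearly in $k$, and finally compare the normalizing constants.

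\textbf{Step 1 (index sets).} Each $\bb{S}_i$ is a deterministic coordinate projection of $\bb{Z}$ on the finite set $I(t_0+ia,x^*)\cup\{(t_0+ia,x^*)\}$. Its temporal coordinates lie in $[t_0+ia-p,\,t_0+ia]$ and its spatial coordinates lie within $cp$ of $x^*$. Put $\Gamma=\bigcup_{l=1}^u\big(I(t_0+i_la,x^*)\cup\{(t_0+i_la,x^*)\}\big)$ and let $\Gamma'$ be the analogous union over the $j$'s. Every point of $\Gamma$ has time at most $t_0+i_ua$, and every point of $\Gamma'$ has time at least $t_0+(i_u+k)a-p$. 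Hence all points of $\Gamma$ precede all points of $\Gamma'$ lexicographically, and
\[
dist(\Gamma,\Gamma')\ge ka-p\ge (k-1)a+1,
\]
where the last inequality uses $a\ge p+1$. Define $\tilde F(\bb{Z}_\Gamma):=F(\bb{S}_{i_1},\ldots,\bb{S}_{i_u})$ and $\tilde G(\bb{Z}_{\Gamma'}):=G(\bb{S}_{j_1},\ldots,\bb{S}_{j_v})$. Points shared between overlapping index sets, for example when $i_l=i_{l+1}$, are handled by letting the coordinate maps duplicate entries.

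\textbf{Step 2 (norms).} By construction $\|\tilde F\|_\infty=\|F\|_\infty$, and $\tilde F$ is bounded on $\R^{|\Gamma|}$. The map from $\bb{Z}_{\Gamma'}$ to $(\bb{S}_{j_1},\ldots,\bb{S}_{j_v})$ only selects and repeats coordinates. Under the $\ell^1$-type norm on products used in the weak dependence literature, each coordinate of $\bb{Z}_{\Gamma'}$ appears in at most $v$ of the blocks. Therefore $Lip(\tilde G)\le v\,Lip(G)$ up to a constant, and in fact $Lip(\tilde G)\le Lip(G)$ with the natural norm, since distinct $j$'s give disjoint sets once $k\ge1$ and $a\ge p+1$. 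The number of points is $|\Gamma'|\le v(D+1)$.

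\textbf{Step 3 (conclusion).} Applying Definition \ref{thetadependence} with $u'=|\Gamma|$ and $v'=|\Gamma'|$ gives
\[
|Cov(\tilde F,\tilde G)|\le \|F\|_\infty\, v'\,Lip(\tilde G)\,\theta_{lex}(ka-p)\le (D+1)\,\|F\|_\infty\, v\, Lip(G)\,\theta_{lex}(ka-p).
\]
Hence $\theta_{\bb{S}}(k)\le (D+1)\,\theta_{lex}(ka-p)\to0$. For the STOU process, (\ref{boundT}) makes this bound explicitly exponentially decaying in $k$. Stationarity of $\bb{S}$ follows from translation invariance of the ambit sets together with stationarity of $\bb{Z}$.

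The main obstacle is Step 2. The Lipschitz constant of the composed $G$ and the cardinality factor $v'$ must be tracked carefully, because the norms on $(\R^{D+1})^v$ and on $\R^{|\Gamma'|}$ differ. We also need to check that the supremum over sets with distance exactly $r$ in Definition \ref{thetadependence} can be replaced by distance at least $r$. This holds if $\theta_{lex}$ is taken as non-increasing, which we would arrange by passing to $\sup_{r'\ge r}\theta_{lex}(r')$.
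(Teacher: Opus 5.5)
Your argument is correct for the lemma as the paper defines $\theta$-weak dependence: Appendix~\ref{Inher} uses a single future block, i.e.\ $v=1$. It takes a genuinely different route from the paper.

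\textbf{The paper's route.} It never invokes the $\theta$-lex coefficients of $\bb{Z}$. It goes back to the integral representation \eqref{stou} and replaces the future block $\bb{S}_j$ by a truncated copy $\bb{S}_j^{(\psi)}$ built on truncated ambit sets $A_t(x)\cap V^{\psi}_t(x)$. Independent scattering of the L\'evy basis then makes $\bb{S}_j^{(\psi)}$ independent of $(\bb{S}_{i_1},\ldots,\bb{S}_{i_u})$, so one covariance term vanishes. The other term is bounded by $2\|F\|_\infty Lip(G)(D+1)\,\E|\bb{Z}_t(x)-\bb{Z}^{(\psi)}_t(x)|$.

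\textbf{Your route.} You use Definition~\ref{thetadependence} as a black box. This is legitimate because the paper's version takes the supremum over $v$, so it already controls all $D+1$ coordinates of $\bb{S}_j$ at once. Together with your time-gap estimate $dist(\Gamma,\Gamma')\ge ka-p$ and the monotone envelope, you get the explicit bound $\theta_{\bb{S}}(k)\le (D+1)\sup_{r'\ge ka-p}\theta_{lex}(r')$.

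\textbf{What each buys.} Your route gives generality: it works for any $\theta$-lex weakly dependent field, with no use of the L\'evy basis or of the truncation sets. It also gives a quantitative rate in terms of the same coefficient $\theta_{lex}(a-p)$ that enters \eqref{PAC} and \eqref{choice}, hence exponential decay in $k$ via \eqref{boundT}. The paper's coupling route needs the MMAF structure. In exchange, it does not need a $\theta$-lex coefficient that handles several future points simultaneously. Because it works block by block, it also extends verbatim to several future blocks, including repeated ones.

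\textbf{On Step~2.} For $v=1$ your ``main obstacle'' disappears. The coordinates of $\bb{S}_j$ are already listed in lexicographic order, since the output point $(t_0+ja,x^*)$ has the largest time. Hence $\tilde G=G$ and $Lip(\tilde G)=Lip(G)$ with $\|\cdot\|_1$ on both sides. Also, you only have $\|\tilde F\|_\infty\le\|F\|_\infty$ rather than equality, but that is the direction you need.

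\textbf{A gap in your stronger multi-block version.} There you allow $j_1\le\cdots\le j_v$, but the claim $Lip(\tilde G)\le Lip(G)$ relies on the $j$'s being distinct. Suppose $w$ distinct indices occur and one of them is repeated $M$ times. Composition then only gives $Lip(\tilde G)\le M\,Lip(G)$, while $|\Gamma'|=w(D+1)$. So the normalized covariance is bounded only by $\frac{wM(D+1)}{v}\sup_{r'\ge ka-p}\theta_{lex}(r')$. Taking $w=M$ and $v=2M-1$ shows this factor is unbounded, so the supremum over $v$ is not controlled by the black-box route. To close this, either require $j_1<\cdots<j_v$ (which still covers the paper's case), or treat repeated future blocks with the paper's blockwise coupling.
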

 $\theta$- weak dependence is a dependence notion that holds for causal stochastic processes. The relationship between  $\theta$ and $\theta$-lex weak dependence is thoroughly analyzed in \cite{CSS20}. We give the definition of $\theta$-weak dependence for stochastic processes defined on the index set $\Z$ below.

\begin{definition}
    Let $\bb{S}=(\bb{S}_i)_{i\in\Z}$ be a $\R^{D+1}$-valued stochastic process. Then $\bb{S}$ is called $\theta$-weakly dependent if 
    $$
    \theta(k)  = \sup_{u\in \mathbb{N}} \theta_{u}(k) \underset{k\to \infty}{\longrightarrow} 0
    $$
    where 
    \begin{align*}
		\theta_{u}(k) = \sup\left\{ \frac{|Cov(F(\bb{S}_{\Gamma}) , G(\bb{S}_{j})  )|}{\Vert F \rVert_{\infty} Lip(G)} , F\in \mathcal{G}^*_u, \, G\in \mathcal{G}_1, \Gamma, j. \right\}
    \end{align*}
    where $\mathcal{G}^*_u$ represents the set of the bounded functions on $\R^{u(D+1)}$, $\mathcal{G}_1$ the set of bounded and Lipschitz functions on $\R^{(D+1)}$, $\Gamma = \left\{i_1, \dots,i_u \right\} \subset \Z $, $j\in \Z$ such that $i_1\leq i_2 \leq \dots \leq i_u+k =j$. We call $(\theta(k))_{k\in\mathbb{R}^+}$ the $\theta$-coefficients.
\end{definition}

Before giving the proof of Proposition \ref{inerS}, let us remind the structure of the random vector $\bb{S}:=(\bb{S}_i)_{i \in \Z}$. 
\[
\bb{S}_i=(\bb{X}_i,\bb{Y_i})=(\bb{Z}_{i_1}(x_1),\ldots,\bb{Z}_{i_{D}}(x_{D}),\bb{Z}_{t_0+ia}(x^*))
\]
where $(i_s,x_s) \in I(t_0+ia,x^*)$ and $(i_s,x_s)<_{lex} (i_{s+1},x_{s+1})$ for all $s=1,\ldots, D$.

We define now a \emph{truncation for the random vector} $\bb{S}_j$ as follows, let $\psi:=\psi(r)$ for $r \in \R_+$
\[
\bb{S}_j^{(\psi)}=(\bb{Z}_{j_1}^{(\psi)}(x_1),\ldots,\bb{Z}_{j_{D}}{(\psi)}(x_{D}),\bb{Z}_{t_0+ia}{(\psi)}(x^*)).
\]
Let us call $(t,x)$ a generic spatial-time index appearing in the definition of $\bb{S}_j$, then
\[
\bb{Z}_{t}^{(\psi)}(x)=\int_{A_t(x) \cap V^{\psi}_{t}(x)} \exp(-A(t-s)) \, \Lambda(ds,d\xi),
\]
with $V^{\psi}_{t}(x)$ a set such that 
\begin{itemize}
	\item $|V^{\psi}_{t}(x)|\to \infty$ as $r \to \infty$, 
    \item  $\mathbb{E}[|\bb{Z}_{t}(x)-\bb{Z}^{(\psi)}_{t}(x)|] \to 0$ as $r \to \infty$, and
	\item  such that the random vectors $\bb{S}_{l}$ and $\bb{S}_j^{(\psi)}$ are independent for $l<j$ and represent marginals of the field $\bb{Z}$ at distance $r=(j-l)a-p$.
\end{itemize}
A formal definition of the sets $V^{\psi}_{t}(x)$ can be found in \cite{CSS20}, and consists of a truncation of the ambit set in the definition of the random field. The property of independence between random vectors $\bb{S}_{l}$ and $\bb{S}_j^{(\psi)}$ for $l<j$ follows from the property of the L\'evy basis.  

In the following, Lipschitz continuous is understood to mean globally Lipschitz. For $u\in\N$,  $\mathcal{G}_1$ is the class of bounded, Lipschitz continuous functions from $\R^{D+1}$ to $\R$ with respect to the distance $\|\cdot\|_1$ and define the Lipschitz constant as
\begin{gather*}
	\label{Lipcost}
	Lip(h)=\sup_{x\neq y}\frac{|h(x)-h(y)|}{\|x-y \|_1}.
\end{gather*} 

\begin{proof}
For $F\in\mathcal{G}_u^*$, $G\in\mathcal{G}_1$, and for $u \in \N$, $i_1 \leq i_2 \leq \ldots \leq i_u < i_u+k=j$
\begin{align}
    |Cov(F(\bb{S}_{i_1},...,\bb{S}_{i_u}),G(\bb{S}_j))| \leq &|Cov(F(\bb{S}_{i_1},...,\bb{S}_{i_u}),G(\bb{S}_j)-G(\bb{S}^{(\psi)}_j)| \label{B.1}\\
    &+|Cov(F(\bb{S}_{i_1},...,\bb{S}_{i_u}),G(\bb{S}^{(\psi)}_j))| \label{B.2}
\end{align}
We have that \ref{B.2} is equal to zero because of the independence of the random vectors $(\bb{S}_{i_1}, \ldots, \bb{S}_{i_u})$ and $\bb{S}_j$. The term (\ref{B.1}) is less than or equal to
\begin{align}
    & 2 ||F||_{\infty}Lip(G)(\mathbb{E}[\|\bb{S}_j-\bb{S_j^{(\psi)}}\|_1]\\
    & \leq  2 ||F||_{\infty}Lip(G)\underbrace{(D+1)(\mathbb{E}[|\bb{Z}_{t}(x)-\bb{Z}^{(\psi)}_{t}(x)|])}_{\theta(k)}.
\end{align}
Because $r=ka-p$ in relation to the marginals used in (\ref{B.1}), and the properties of the truncation for the random vector $\bb{S}_j$, $\theta(k) \to 0$ for $k \to \infty$, proving that $\bb{S}$ is a $\theta$-weakly dependent process.
\end{proof}

\section{Training algorithm convergence}
\label{convergence}

For each spatial position, we apply Algorithm \ref{algorithm2} and study its convergence with respect to the number of epochs which are setup to be a maximum of $60$ in the GAU and NIG framework, and $5000$ in the 2mT one.

\begin{figure}[H]
\centering
\includegraphics[width=0.7\textwidth]{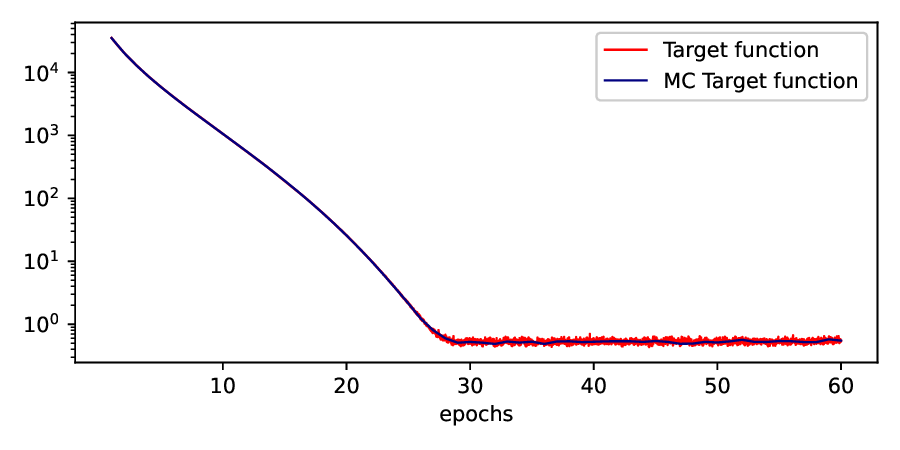}
\caption{Gau, $[300^2]$, $\pi \sim N(0,1/110)$}
\label{fig:convergence1}
\end{figure}

\begin{figure}[H]
\centering
\includegraphics[width=0.7\textwidth]{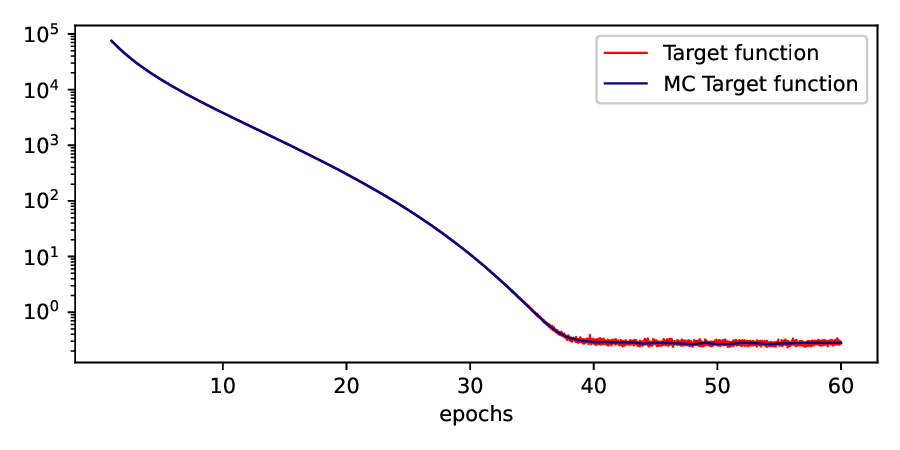}
\caption{NIG, $[300^2]$, $\pi \sim N(0,1/210)$}
\label{fig:convergence2}
\end{figure}

\begin{figure}[H]
\centering
\includegraphics[width=0.7\textwidth]{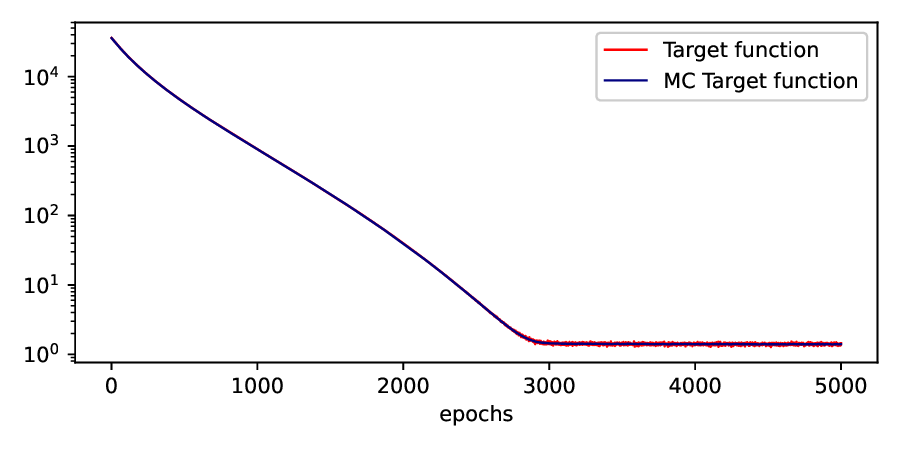}
\caption{2mT, $[300^2]$, $\pi \sim N(0,1/90)$}
\label{fig:convergence3}
\end{figure}

Figures \ref{fig:convergence1},\ref{fig:convergence2}, and \ref{fig:convergence3} show the convergence of the target function for an exemplary stochastic feed-forward neural network used in our empirical study. Note that the curves in Figures \ref{fig:convergence1}, \ref{fig:convergence2}, and \ref{fig:convergence3} report the average values of the target function over the analyzed spatial positions. Our stopping criteria is based on an average evaluation of the value of the epochs at which the plots above tend to zero. 

\end{appendices}


\end{document}